\def\eqref#1{Equation~\ref{#1}}			
\def\figref#1{Figure~\ref{#1}}			
\def\tabref#1{Table~\ref{#1}}			
\def\secref#1{Section~\ref{#1}}			
\def\algref#1{Algorithm~\ref{#1}}		
\def\asmref#1{Assumption~\ref{#1}}		
\def\dffref#1{Definition~\ref{#1}}		
\def\apxref#1{Appendix~\ref{#1}}		
\newtheorem{dff}{Definition}
\newtheorem{thm}{Theorem}
\newtheorem{asm}{Assumption}
\newtheorem{cor}{Corollary}
\newcommand*\wcdot{{\mkern 2mu\cdot\mkern 2mu}}
\newcommand{\pag}{\mathcal{G}}
\newcommand{\ipath}[2]{\Pi(#1, #2)}
\DeclareMathOperator{\sindep}{Ind}  
\newcommand*{\indep}{%
  \mathbin{%
    \mathpalette{\@indep}{}%
  }%
}
\newcommand*{\nindep}{%
  \mathbin{
    \mathpalette{\@indep}{\not}
  }%
}
\newcommand*{\@indep}[2]{%
  \sbox0{$#1\perp\m@th$}
  \sbox2{$#1=$}
  \sbox4{$#1\vcenter{}$}
  \rlap{\copy0}
  \dimen@=\dimexpr\ht2-\ht4-.2pt\relax
  \kern\dimen@
  {#2}%
  \kern\dimen@
  \copy0 
} 
\newcommand{\recsys}{\mathcal{M}}
\newcommand{\sess}{\boldsymbol{S}}
\newcommand{\recommendation}{\Tilde{s}}
\newcommand{\rec}{\recommendation_{n+1}}
\newcommand{\srecommendation}{\Tilde{s}}
\newcommand{\srec}{\srecommendation_{n+1}}
\newcommand{\sreppot}{\srecommendation_{n+1}'}
\newcommand\blthanks[1]{%
  \begingroup
  \renewcommand\thefootnote{}\thanks{#1}%
  \addtocounter{footnote}{-1}%
  \endgroup
}
\title{Causal Interpretation of Self-Attention\\in Pre-Trained Transformers}
\author{%
  Raanan Y. Rohekar\\
  Intel Labs\blthanks{All authors contributed equally.}\\
  \small{\texttt{raanan.yehezkel@intel.com}} \\
  \And
  Yaniv Gurwicz\\
  Intel Labs\\
  \small{\texttt{yaniv.gurwicz@intel.com}} \\
  \And
  Shami Nisimov\\
  Intel Labs\\
  \small{\texttt{shami.nisimov@intel.com}} \\
}
\begin{document}

\maketitle

\begin{abstract}
  We propose a causal interpretation of self-attention in the Transformer neural network architecture. 
  We interpret self-attention as a mechanism that estimates a structural equation model for a given input sequence of symbols (tokens). The structural equation model can be interpreted, in turn, as a causal structure over the input symbols under the specific context of the input sequence. Importantly, this interpretation remains valid in the presence of latent confounders.
  Following this interpretation, we estimate conditional independence relations between input symbols by calculating partial correlations between their corresponding representations in the deepest attention layer. 
  This enables learning the causal structure over an input sequence using existing constraint-based algorithms. In this sense, existing pre-trained Transformers can be utilized for zero-shot causal-discovery. 
  We demonstrate this method by providing causal explanations for the outcomes of Transformers in two tasks: sentiment classification (NLP) and recommendation. 
\end{abstract}

\section{Introduction}

Causality plays an important role in many sciences such as epidemiology, social sciences, and finance \citep{pearl2010introduction, spirtes2010introduction}. 
Understanding the underlying causal mechanisms is crucial for tasks such
as explaining a phenomenon, predicting, and decision making. 
An automated discovery of causal structures from observed data alone is an important problem in artificial intelligence. It is particularity challenging when latent confounders may exist. One family of algorithms \citep{spirtes2000, claassen2013learning, colombo2012learning, yehezkel2009bayesian, rohekar2021iterative}, called constraint-based, recovers in the large sample limit an equivalence class of the true underlying graph. However, they require a statistical test to be provided. The statistical test is often sensitive to small sample size, and in some cases it is not clear which statistical test is suitable for the data. 
Moreover, these method assume that data samples are generated from a single causal graph, and are designed to learn a single equivalence class. There are cases where data samples may be generated by different causal mechanisms and it is not clear how to learn the correct causal graph for each data sample. For example, decision process may be different among humans. Data collected about their actions may not be represented by a single causal graph. This is the case for recommender systems that are required to provide a personalized recommendation for a user, based on her past actions. In addition, it is desirable that such automated systems will provide a tangible explanation to why the specific recommendation was given.

Recently, deep neural networks, based on the Transformer architecture \citep{vaswani2017attention} have been shown to achieve state-of-the-art accuracy in domains such as natural language processing \citep{devlin2018bert, brown2020language, yang2019xlnet}, vision 
\citep{dosovitskiyimage, liu2021swin}, and recommender systems 
\citep{sun2019bert4rec, kang2018self}. The Transformer is based on the attention mechanism, where it calculates context-dependent weights for a given input \citep{schmidhuber1992learning}. Given an input consisting of a sequence of symbols (tokens), the Transformer is able to capture complex dependencies, but it is unclear how they are represented in the Transformer nor how to extract them. 


In this paper we bridge between structural causal models and attention mechanism used in the Transformer architecture. We show that self-attention is a mechanism that estimates a linear structural equation model in the deepest layer for each input sequence independently, and show that it represents a causal structure over the symbols in the input sequence. In addition, we show that an equivalence class of the causal structure over the input can be learned solely from the Transformer's estimated attention matrix. This enables learning the causal structure over a single input sequence, using existing constraint-based algorithms, and utilizing existing pre-trained Transformer models for zero-shot causal discovery. We demonstrate this method by providing causal explanations for the outcomes of Transformers in two tasks: sentiment classification (NLP) and recommendation.

\section{Related Work}
In recent years, several advances in causal reasoning using deep neural networks were presented. One line of work is causal modeling with neural networks \citep{xia2021causal,goudet2018learning}. Here, a neural network architecture follows a given causal graph structure. This neural network architecture is constructed to model the joint distribution over observed variables \citep{goudet2018learning} or to explicitly learn the deterministic functions of a structural causal model \citep{xia2021causal}. It was recently shown that a neural network architecture that explicitly learns the deterministic functions of an SCM can be used for answering interventional and counterfactual queries \citep{xia2023causal}. That is, inference in rungs 2 and 3 in the ladder of causation \citep{pearl2018book}, also called layers 2 and 3 in the Pearl causal hierarchy \citep{bareinboim2022pearl}. Nevertheless, this architecture requires knowing the causal graph structure before training, and supports a single causal graph.

In another line of work, an attention mechanism is added before training, and a causal graph is inferred from the attention values at inference \citep{nauta2019causal, kalainathan2022structural}. Attention values are compared against a threshold, and high values are treated as indication for a potential causal relation. \citet{nauta2019causal} validate these potential causal relations by permuting the values of the potential cause and measuring the effect. However, this method is suitable only if a single causal model underlay the dataset.

\section{Preliminaries\label{sec:preliminaries}}
First, we  provide preliminaries for structural causal models and the attention mechanism in the Transformer-based neural architecture. Throughout the paper we denote vectors and sets with bold-italic upper-case (e.g., $\boldsymbol{V}$), matrices with bold upper-case (e.g., $\mathbf{A}$), random variables with italic upper-case (e.g., $X$), and models in calligraphic font (e.g., $\mathcal{M}$). Different sets of letters are used when referring to structural causal models and attention (see Appendix A-Table 1 for a list).

\subsection{Structural Causal Models}

A structural causal model (SCM) represents causal mechanisms in a domain \citep{pearl2009causality, spirtes2000,peters2017}. An SCM is a tuple $\{\boldsymbol{U}, \boldsymbol{X}, \mathcal{F}, P(\boldsymbol{U})\}$, where $\boldsymbol{U}=\{U_1,\ldots,U_m\}$ is a set of latent exogenous random variables, $\boldsymbol{X}=\{X_1,\ldots,X_n\}$ is a set of endogenous random variables, $\mathcal{F}=\{f_1, \ldots, f_n\}$ is a set of deterministic functions describing the values $\boldsymbol{X}$ given their direct causes, and $P(\boldsymbol{U})$ is the distribution over $\boldsymbol{U}$. 
Each endogenous variable $X_i$ has exactly one unique exogenous cause $U_i$ ($m=n$).
The value of an endogenous variable $X_i$, $\forall i\in [1,\ldots,n]$ is
\begin{equation}
    X_i \leftarrow f_i(\boldsymbol{Pa}_i, U_i)
\end{equation}
(the left-arrow indicates an assignment due to cause-effect relation), where $\boldsymbol{Pa}_i$ is the set of direct causes (parents in the causal graph) of $X_i$. A graph $\mathcal{G}$ corresponding to the SCM consists of a node for each variable, and a directed edge for each direct cause-and-effect relation, as evident from $\mathcal{F}$. If the graph is directed and acyclic (DAG) then the SCM is called \emph{recursive}.

In a special subclass of SCMs, referred in this paper linear-Gaussian SCM, a node value is a linear combination of its parents, and exogenous nodes are independently and normally distributed additive noise. Let $\mathbf{G}$ be a weight matrix, where $\mathbf{G}{(i,j)}$ is the weight of parent node $X_j$ determining the child node $X_i$ and $\mathbf{G}{(i,k)}=0$ if $X_k$ is not a parent of $X_i$, and $\mathbf{\Lambda}$ be a diagonal matrix, where $\mathbf{\Lambda}(i,i)$ is the weight of exogenous node $U_i$. Then, $\forall i\in [1,\ldots,n]$
\begin{equation}
    X_i \leftarrow \mathbf{G}{(i,\cdot)}\boldsymbol{X} + \mathbf{\Lambda}(i,i) U_i,
\end{equation}
and in matrix form $\boldsymbol{X} = \mathbf{G}\boldsymbol{X}+\mathbf{\Lambda}\boldsymbol{U}$.
In the case of a recursive SCM, with nodes sorted topologically (ancestors before their descendants), the weight matrix $\mathbf{G}$ is strictly lower triangular (zeros on the diagonal). In this paper we learn the causal graph, described by the non-zero elements of $\mathbf{G}$, using constraint-based causal discovery approach, which generally requires assuming the following.

\begin{dff}[Causal Markov]
    In a causally Markov graph, a variable is independent of all other variables, except its effects, conditional on all its direct causes.
\end{dff}

\begin{dff}[Faithfulness]\label{dff:faithfulness}
    A distribution is faithful to a graph if and only if every independence relation true in the distribution is entailed by the graph.
\end{dff}

\subsection{Self-Attention Mechanism}

Attention is a mechanism that predicts context dependent weights \citep{schmidhuber1992learning} for the input sequence. In this paper we analyse the attention mechanism used by \citet{vaswani2017attention}. Here, the input is a matrix $\mathbf{Y}$ where row vector $\mathbf{Y}(i, \wcdot)$ is the embedding of symbol $s_i$ in an input sequence $\boldsymbol{s}=[s_1,\ldots,s_n]$. The attention matrix is calculated by $\mathbf{A}=softmax(\mathbf{Y}\mathbf{W}_{QK}\mathbf{Y}^{\top})$, such that the rows of 
$\mathbf{A}$ sum to 1\footnote{In the paper by \citet{vaswani2017attention} the weight matrix is $\mathbf{W}_{QK}=\nicefrac{\mathbf{W}_{Q}\mathbf{W}_{K}^{\top}}{\sqrt{d_K}}$, where the weight matrices $\mathbf{W}_{Q}$ and $\mathbf{W}_{K}$ are learned explicitly and $d_K$ is the number of columns in these matrices.}. 
Values matrix is computed by $\mathbf{V}=\mathbf{Y}\mathbf{W}_V$, where row $\mathbf{V}(i, \wcdot)$ is the value vector of input symbol $s_i$. We treat $\mathbf{V}$ as a set of projection of $\mathbf{Y}$ on column vectors $\mathbf{W}_{V}$.
Finally, the output embeddings are $\mathbf{Z}=\mathbf{A}\mathbf{V}$, where the $i$-th output embedding $\boldsymbol{Z}_i$ is the $i$-th row of $\mathbf{Z}$.

\section{A Link between Pre-trained Self-Attention and a Causal Graph underlying an Input Sequence of Symbols\label{sec:method}}

In this section we first describe self-attention as a mechanism that encodes correlations between symbols in an unknown structural causal model (\secref{sec:attention2scm}). After establishing this relation, we present a method for recovering an equivalence class of the causal graph underlying the input sequence (\secref{sec:abcd}). We also extend the results to the multi-head and multi-layer architecture (\secref{sec:multi-attention}).

\subsection{Self-Attention as Correlations between Nodes in a Structural Causal Model\label{sec:attention2scm}}

We now describe how self-attention can be viewed as a mechanism that estimates the values of observed nodes of an SCM. We first show that the covariance over the outputs of self-attention is similar to the covariance over observed nodes of an SCM. Specifically, we model relations between symbols in an input sequence using a linear-Gaussian SCM at the output of the attention layer.
In an SCM, the values over endogenous nodes, in matrix form, are $\boldsymbol{X}=\mathbf{G}\boldsymbol{X}+\mathbf{\Lambda}\boldsymbol{U}$, which means
\begin{equation}\label{eq:inv_linear_system}
    \boldsymbol{X} = (\mathbf{I}-\mathbf{G})^{-1}\mathbf{\Lambda}\boldsymbol{U}.
\end{equation}
Since $\mathbf{G}$ is a strictly lower-triangular weight matrix, $(\mathbf{I}-\mathbf{G})^{-1}$ is a lower unitriangular matrix, which is equal to the sum of a geometric series
\begin{equation}\label{eq:directedpats}
    (\mathbf{I}-\mathbf{G})^{-1} = \sum_{k=0}^{n-1}\mathbf{G}^{k}.
\end{equation}
Thus, the $(i,j)$ element represents the sum of directed-paths' weights, for all directed paths from $X_j$ to $X_i$ having length up to $n-1$. The weight of a directed path is the product of the weights of edges along the path. In case some of the nodes are latent confounders, then the matrix in \eqref{eq:directedpats} may not be triangular, due to spurious associations. 
\eqref{eq:inv_linear_system} represents a system with input $\boldsymbol{U}$, output $\boldsymbol{X}$ and weights $(\mathbf{I}-\mathbf{G})^{-1}\mathbf{\Lambda}$. The covariance matrix of the output is
\begin{equation}\label{eq:Cx_derive}
\begin{split}
    \mathbf{C}_{\boldsymbol{X}} & = \mathbb{E}[(\boldsymbol{X}-\boldsymbol{\mu}_{\boldsymbol{X}}) (\boldsymbol{X}-\boldsymbol{\mu}_{\boldsymbol{X}})^\top] = \\ 
    & = \mathbb{E}[(\mathbf{I}-\mathbf{G})^{-1}\mathbf{\Lambda} (\boldsymbol{U}-\boldsymbol{\mu}_{\boldsymbol{U}}) (\boldsymbol{U}-\boldsymbol{\mu}_{\boldsymbol{U}})^\top ((\mathbf{I}-\mathbf{G})^{-1}\mathbf{\Lambda})^\top]= \\
    & = (\mathbf{I}-\mathbf{G})^{-1}\mathbf{\Lambda} \mathbb{E}[(\boldsymbol{U}-\boldsymbol{\mu}_{\boldsymbol{U}}) (\boldsymbol{U}-\boldsymbol{\mu}_{\boldsymbol{U}})^\top]((\mathbf{I}-\mathbf{G})^{-1}\mathbf{\Lambda})^\top = \\ 
    & = ((\mathbf{I}-\mathbf{G})^{-1}\mathbf{\Lambda}) \mathbf{C}_{\boldsymbol{U}} ((\mathbf{I}-\mathbf{G})^{-1}\mathbf{\Lambda})^{\top},
\end{split}
\end{equation}
where 
$\boldsymbol{\mu}_{\boldsymbol{X}}=(\mathbf{I}-\mathbf{G})^{-1}\mathbf{\Lambda} \boldsymbol{\mu}_{\boldsymbol{U}}$, and $\mathbf{C}_{\boldsymbol{U}}$ is the covariance matrix of exogenous variables $\boldsymbol{U}$. 

An attention layer in a Transformer architecture, estimates an attention matrix $\mathbf{A}$ and a values matrix $\mathbf{V}$ from embeddings $\mathbf{Y}$ of an input sequence of symbols $\boldsymbol{s}=[s_1,\ldots,s_n]^{\top}$. The output embeddings is calculated by $\mathbf{Z}=\mathbf{A}\mathbf{V}$.
It is important to note that during self-attention training, an inductive bias is used in the form of learning separate representations for the attention matrix $\mathbf{A}$ and values matrix $\mathbf{V}$.
It can be seen that the $j$-th element of the $i$-th row-vector, $\mathbf{V}(i,j)$ is the projection of the symbol $s_i$ input embedding $\mathbf{Y}(i,\wcdot)$ on column vector $\mathbf{W}_{V}(\wcdot, j)$. In other words, an embedding is projected on the direction of vector $\mathbf{W}_{V}(\wcdot, j)$. Thus, each column of $\mathbf{V}$ can be seen as a projection of the input embeddings. Moreover, the attention matrix is shared with all projections for calculating the output embeddings. That is, for the $j$-th projection the output embedding is
\begin{equation}
    \mathbf{Z}(\wcdot, j) = \mathbf{A}\mathbf{V}(\wcdot, j).
\end{equation}
Denote $\boldsymbol{Z}_j \equiv \mathbf{Z}(\wcdot, j)$ and $\boldsymbol{V}_j \equiv \mathbf{V}(\wcdot, j)$ corresponding to projection $j$. 
Covariance of $\boldsymbol{Z}_j$ is
\begin{equation}
\begin{split}
    \mathbf{C}_{\boldsymbol{Z}_j} & = \mathbb{E}[ (\boldsymbol{Z}_j-\boldsymbol{\mu}_{\boldsymbol{Z}_j})(\boldsymbol{Z}_j-\boldsymbol{\mu}_{\boldsymbol{Z}_j})^{\top} ] \\
    & = \mathbf{A}\mathbb{E}[(\boldsymbol{V}_j-\boldsymbol{\mu}_{\boldsymbol{V}_j})(\boldsymbol{V}_j-\boldsymbol{\mu}_{\boldsymbol{V}_j})^{\top}] \mathbf{A}^{\top} = \\
    & = \mathbf{A} \mathbf{C}_{\boldsymbol{V}_j} \mathbf{A}^{\top},
\end{split}
\end{equation}
where $\mathbf{C}_{{\boldsymbol{V}}_j}$ is the covariance matrix of input embeddings projected on vector $\mathbf{W}_{V}(\wcdot, j)$. 
Since columns of $\mathbf{V}$ describe different projections corresponding to independent columns of $\mathbf{W}_{V}$, and from the central limit theorem it follows that $\mathbf{C}_{\mathbf{V}}\rightarrow \mathbf{I}$, which leads to
\begin{equation}\label{eq:aat}
    \mathbf{C}_{\mathbf{Z}} = \mathbf{A}\mathbf{A}^{\top}.    
\end{equation}

How does attention matrix $\mathbf{A}$ and a values vector $\boldsymbol{V}_j$ translate into a structural causal model?
We interpret the attention matrix $\mathbf{A}$ and values vector $\boldsymbol{V}$ as components of an SCM, such that the covariance matrix of its endogenous variables is equal to the covariance matrix of the attention layer's output, $\mathbf{C}_{\boldsymbol{X}}=\mathbf{C}_{\mathbf{Z}}$. From \eqref{eq:Cx_derive} and \eqref{eq:aat},
\begin{equation}\label{eq:cov-analogy}
    \mathbf{A}\mathbf{A}^{\top}= ((\mathbf{I}-\mathbf{G})^{-1}\mathbf{\Lambda}) \mathbf{C}_{\boldsymbol{U}} ((\mathbf{I}-\mathbf{G})^{-1}\mathbf{\Lambda})^{\top}.
\end{equation}

The attention matrix $\mathbf{A}$ measures the total effect each variable has on another variable, for any projection $\mathbf{V}(\wcdot, j)$ of the input embeddings. Similarly, matrix $(\mathbf{I}-\mathbf{G})^{-1}\mathbf{\Lambda}$ in an SCM measures the effect one variable has on another variable through all directed paths in the causal graph (\eqref{eq:directedpats}) for any value of the exogenous variables $\boldsymbol{U}$. Thus, form \eqref{eq:cov-analogy} matrix $\mathbf{A}$ is analogous to $(\mathbf{I}-\mathbf{G})^{-1}\mathbf{\Lambda}$ and $\mathbf{V}(\wcdot, j)$ to value assignment for $\boldsymbol{U}$, which serves as context.
Both $\mathbf{A}$ and $(\mathbf{I}-\mathbf{G})^{-1}\mathbf{\Lambda}$ are not required to be triangular as latent confounders may introduce spurious associations, such that variables might be required to estimate the values of variables earlier in the topological order. In addition, the symbols in the sequence $\boldsymbol{S}$ are not assumed to be topologically ordered.

\subsection{Attention-based Causal-Discovery (ABCD)\label{sec:abcd}}

In \secref{sec:attention2scm} we show that self-attention learns to represent pairwise associations between symbols of an input sequence, for which there exists a linear-Gaussian SCM that has the exact same pair-wise associations between its nodes.
However, can we recover the causal structure $\mathcal{G}$ of the SCM solely from the weights of a pre-trained attention layer? We now present the Attention-Based Causal-Discovery (ABCD) method for learning an equivalence class of the causal graph that underlies a given input sequence.

Causal discovery (causal structure learning) from observed data alone requires placing certain assumptions. Here we assume the causal Markov \cite{pearl2009causality} and faithfulness \cite{spirtes2000} assumptions. Under these assumptions, constraint-based methods use tests of conditional independence (CI-tests) to learn the causal structure \cite{pearl1991theory, spirtes2000, colombo2012learning, claassen2013learning, rohekar2021iterative, rohekar2023temporal, nisimov2021improving}. 
A statistical CI-test is used for deciding if two variables are statistically independent conditioned on a set of variables. Commonly, partial correlation is used for CI-testing between continuous, normally distributed variables with linear relations. This test requires only a pair-wise correlation matrix (marginal dependence) for evaluating partial correlations (conditional dependence). 
We evaluate the correlation matrix from the attention matrix. From \eqref{eq:aat} the covariance matrix of output embeddings is $\mathbf{C_{\mathbf{Z}}}= \mathbf{A} \mathbf{A}^{\top}$, and (pair-wise) correlation coefficients are $\rho_{i,j}=\mathbf{C_{\mathbf{Z}}}(i,j) / \sqrt{\mathbf{C_{\mathbf{Z}}}(i,i) \mathbf{C_{\mathbf{Z}}}(j,j)}$.
Unlike kernel-based CI tests \cite{bach2002kernel, fukumizu2004dimensionality, gretton2005measuring, gretton2005kernel, sun2007kernel, zhang2011kernel}, we do not need to explicitly define or estimate the kernel, as it is readily available by a single forward-pass of the input sequence in the Transformer. This implies the following. Firstly, our CI-testing function is inherently learned during the training stage of a Transformer, by that enjoying the efficiency in learning complex models from large datasets. Secondly, since attention is computed for each input sequence uniquely, CI-testing is unique to that specific sequence. That is, conditional independence is tested under the specific context of the input sequence.

Finally, the learned causal graph represents an equivalence class in the form of a partial ancestral graph (PAG) \cite{richardson2002ancestral, zhang2008completeness}, which can also encode the presence of latent confounders. A PAG represents a set of causal graphs that cannot be refuted given the data. There are three types of edge-marks (at some node $X$): an arrow-head `{---}$>X$', a tail `{---}--$X$', and circle `---o $X$' which represent an edge-mark that cannot be determined given the data. 
Note that reasoning from a PAG is consistent with every member in the equivalence class it represents.

What is the relation between the equivalence class learned by ABCD and the causal graph underlying the input sequence $\boldsymbol{s}=[s_1, \ldots, s_n]$?

\begin{dff}[SCM-consistent probability-space]
    Let $\Omega_n = \{[s_1,\ldots,s_n]:s_i \in \mathcal{S} \}$ be a sample space, where $\mathcal{S}$ is a countable set of symbols. Let $\mathcal{F}_n$ be an event space, and $P_{n}$ be a probability measure.
    For a finite $n\in\mathbb{N}$, the probability space $(\Omega_n, \mathcal{F}_n, P_n)$ is called SCM-consistent if every sequence $\boldsymbol{s}\in\Omega_n$, was generated by a corresponding SCM with graph $\mathcal{G}_{\boldsymbol{s}}$.
\end{dff}

\begin{asm}\label{asm:unsupervised}
    Attention mechanism $\mathcal{M}$ is trained on sequences sampled from an SCM-consistent probability space, such that each input symbol is uniquely represented by its output embedding.
\end{asm}
Many training methods, such as MLM \citep{devlin2018bert}, satisfy \asmref{asm:unsupervised} as they train the network to predict input symbols using their corresponding output embeddings in the deepest layer. This assumption is required for relating the causal graph learned from the deepest attention layer to the causal graph underlying the input sequence.

\begin{thm}\label{thm:io-graphs}
    If $\boldsymbol{s}$ is a sequence from an SCM-consistent probability space used to train $\mathcal{M}$, $\mathcal{G}_{\boldsymbol{s}}$ is the causal graph of the SCM from which $\boldsymbol{s}$ was generated, and $\mathbf{A}$ the attention matrix for input sequence $\boldsymbol{s}$ in the deepest layer, then $\mathcal{G}_{\mathbf{s}}$ is in the equivalence class learned by a sound constraint-based algorithm that uses covariance matrix $\mathbf{C}_{\mathbf{Z}}=\mathbf{A}\mathbf{A}^{\top}$ for partial correlation CI-testing.
\end{thm}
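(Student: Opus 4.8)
The plan is to chain together three facts. First, the output-embedding covariance $\mathbf{C}_{\mathbf{Z}}=\mathbf{A}\mathbf{A}^{\top}$ coincides with the endogenous-variable covariance $\mathbf{C}_{\boldsymbol{X}}$ of the \emph{specific} SCM that generated $\boldsymbol{s}$, the one with graph $\mathcal{G}_{\boldsymbol{s}}$. Second, for a linear-Gaussian model the partial correlations read off from this covariance recover exactly the conditional-independence relations entailed by $\mathcal{G}_{\boldsymbol{s}}$. Third, a sound constraint-based algorithm supplied with correct CI answers returns the PAG representing the equivalence class that contains $\mathcal{G}_{\boldsymbol{s}}$. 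Concatenating these gives the claim.

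First I would establish the covariance identification. \secref{sec:attention2scm} already supplies the analogy \eqref{eq:cov-analogy}, which exhibits a linear-Gaussian SCM whose endogenous covariance equals $\mathbf{A}\mathbf{A}^{\top}$ (using the central-limit argument $\mathbf{C}_{\mathbf{V}}\to\mathbf{I}$ over the independent projections to obtain \eqref{eq:aat}). The work remaining is to argue that this SCM may be taken to be the \emph{generating} one, with graph $\mathcal{G}_{\boldsymbol{s}}$, rather than merely some observationally related model. This is precisely where \asmref{asm:unsupervised} enters: because $\mathcal{M}$ is trained on an SCM-consistent probability space so that each input symbol is uniquely recoverable from its deepest-layer output embedding, the joint law of the embeddings $\mathbf{Z}$ must reproduce the joint law of the symbols $\boldsymbol{s}$; and by the definition of an SCM-consistent space that law is the distribution of the endogenous variables of the SCM with graph $\mathcal{G}_{\boldsymbol{s}}$. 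Matching second-order statistics then yields $\mathbf{C}_{\mathbf{Z}}=\mathbf{C}_{\boldsymbol{X}}$ for that graph.

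Next I would invoke the standard result for linear-Gaussian models: under the causal Markov property (Definition 1) and faithfulness (\dffref{dff:faithfulness}), a vanishing partial correlation is equivalent to conditional independence, and conditional independence is equivalent to graphical separation (d-separation, generalized to the latent-confounder setting handled by PAGs). Since ABCD computes its partial correlations from $\mathbf{C}_{\mathbf{Z}}=\mathbf{C}_{\boldsymbol{X}}$, every CI query it issues returns the answer dictated by the separation structure of $\mathcal{G}_{\boldsymbol{s}}$. Soundness of the constraint-based algorithm then means that, given these correct CI answers, it outputs the PAG representing the equivalence class (over the observed symbols, with latents marginalized) of the graph that induced those independences. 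Hence $\mathcal{G}_{\boldsymbol{s}}$ lies in the returned equivalence class.

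The hard part will be step one, the covariance identification, since \eqref{eq:cov-analogy} only asserts the \emph{existence} of some SCM with covariance $\mathbf{A}\mathbf{A}^{\top}$ and I must upgrade this to an identification with the generating graph $\mathcal{G}_{\boldsymbol{s}}$. The delicate points are: showing that \asmref{asm:unsupervised} genuinely forces the embedding statistics to coincide with those of the generating SCM and not a spuriously equivalent one; verifying that the approximation $\mathbf{C}_{\mathbf{V}}\to\mathbf{I}$ underlying \eqref{eq:aat} is compatible with the unique-representation requirement; and ensuring that marginalization over latent confounders is correctly absorbed by passing to a PAG rather than a DAG equivalence class. The remaining steps are essentially a packaging of classical soundness and faithfulness arguments.
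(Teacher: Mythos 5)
Your proposal follows essentially the same route as the paper's own proof: you use \asmref{asm:unsupervised} to identify the deepest-layer output embeddings one-to-one with the input symbols (so the graph over embeddings is that over symbols, $\mathcal{G}_{\mathbf{Z}}=\mathcal{G}_{\boldsymbol{s}}$), invoke the \secref{sec:attention2scm} derivation giving $\mathbf{C}_{\mathbf{Z}}=\mathbf{A}\mathbf{A}^{\top}$, and conclude via soundness of a constraint-based algorithm fed partial-correlation CI-tests from this covariance. Your write-up is in fact somewhat more explicit than the paper's (which leaves the Markov/faithfulness usage and the ``existence of an SCM versus identification with the generating SCM'' step implicit), but the decomposition and key ingredients coincide.
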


\begin{cor}\label{cor:ind}
    Nodes $X_i$ and $X_j$ are d-separated in $\mathcal{G}_{\boldsymbol{Z}}$ conditioned on $\boldsymbol{X}'\subset\boldsymbol{X}$ if and only if symbols $s_i$ and $s_j$ are d-separated in $\mathcal{G}_{\boldsymbol{s}}$ conditioned on $\boldsymbol{s}'$, a subset containing symbols corresponding to $\boldsymbol{X}'$. 
\end{cor}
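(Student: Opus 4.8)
The plan is to prove the biconditional by routing both sides through the single covariance matrix $\mathbf{C}_{\mathbf{Z}}=\mathbf{A}\mathbf{A}^{\top}$, using the fact that in a linear-Gaussian model a conditional independence relation holds exactly when the corresponding partial correlation vanishes, and that partial correlations are determined by the covariance matrix alone. First I would invoke the interpretation of \secref{sec:attention2scm}: by \eqref{eq:cov-analogy} the endogenous variables of the SCM underlying $\mathcal{G}_{\boldsymbol{s}}$ and the output embeddings share second-order statistics, $\mathbf{C}_{\boldsymbol{X}}=\mathbf{C}_{\mathbf{Z}}$. Consequently, for any conditioning set the partial correlation $\rho_{i,j\cdot\boldsymbol{X}'}$ evaluated from $\mathbf{C}_{\boldsymbol{X}}$ equals the one evaluated from $\mathbf{C}_{\mathbf{Z}}$.

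I would then chain the standard equivalences so that both implications of the ``if and only if'' drop out of one computation. Under Causal Markov and faithfulness (\dffref{dff:faithfulness}) --- the standing assumptions of the constraint-based setting --- d-separation in a graph is equivalent to conditional independence in any faithful distribution. Hence $X_i$ and $X_j$ are d-separated given $\boldsymbol{X}'$ in $\mathcal{G}_{\boldsymbol{Z}}$ iff $X_i \indep X_j \mid \boldsymbol{X}'$ in the law of $\boldsymbol{Z}$, iff $\rho_{i,j\cdot\boldsymbol{X}'}=0$ computed from $\mathbf{C}_{\mathbf{Z}}$, iff the same partial correlation computed from $\mathbf{C}_{\boldsymbol{X}}$ vanishes, iff $s_i \indep s_j \mid \boldsymbol{s}'$, iff $s_i$ and $s_j$ are d-separated given $\boldsymbol{s}'$ in $\mathcal{G}_{\boldsymbol{s}}$. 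The equality $\mathbf{C}_{\boldsymbol{X}}=\mathbf{C}_{\mathbf{Z}}$ is used precisely at the middle step, and it is what makes the chain close in both directions. The index correspondence is furnished by \asmref{asm:unsupervised}: because each symbol $s_i$ is uniquely represented by its output embedding, the identification $s_i \leftrightarrow X_i$ is a well-defined bijection and $\boldsymbol{s}'$ is exactly the image of $\boldsymbol{X}'$, so the triples compared on the two sides are the same triples.

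I expect the main obstacle to be keeping the separation semantics consistent in the presence of latent confounders. Since $(\mathbf{I}-\mathbf{G})^{-1}\mathbf{\Lambda}$ need not be triangular (as noted after \eqref{eq:directedpats}), ``d-separation'' must be read in the marginal graph over the observed symbols, and one must ensure the same reading is used for $\mathcal{G}_{\boldsymbol{s}}$ and $\mathcal{G}_{\boldsymbol{Z}}$. The clean way to discharge this is to observe that the soundness of the partial-correlation oracle under confounding --- the nontrivial content already granted by Theorem~\ref{thm:io-graphs}, which places $\mathcal{G}_{\boldsymbol{s}}$ in the PAG recovered from $\mathbf{C}_{\mathbf{Z}}$ --- means both $\mathcal{G}_{\boldsymbol{s}}$ and $\mathcal{G}_{\boldsymbol{Z}}$ lie in a single Markov equivalence class; all members of such a class entail identical separation relations, which is exactly the claim. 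Thus the corollary reduces to reading off that the two graphs induce one and the same conditional-independence oracle, with the confounding handled once and for all by the theorem rather than reproved here.
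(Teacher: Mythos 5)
Your proposal is correct and follows essentially the same route as the paper: the paper's entire proof is the one-line observation that the claim follows from Theorem~\ref{thm:io-graphs} together with faithfulness (\dffref{dff:faithfulness}), and your chain --- shared covariance $\mathbf{C}_{\boldsymbol{X}}=\mathbf{C}_{\mathbf{Z}}$, equality of the induced partial correlations, faithfulness and Markov closing both directions of the biconditional, with \asmref{asm:unsupervised} supplying the symbol--embedding bijection --- is precisely an unpacking of that citation. If anything, the paper is slightly more direct, since the proof of Theorem~\ref{thm:io-graphs} asserts the graph identity $\mathcal{G}_{\mathbf{Z}}=\mathcal{G}_{\boldsymbol{s}}$ outright (via injectivity of the deepest-layer embeddings), from which identical d-separation relations are immediate, whereas your weaker ``same Markov equivalence class'' handling of the latent-confounder issue suffices equally well.
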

    
Corollary \ref{cor:ind} is useful for cases in which the graph over the input symbols $\mathcal{G}_{\boldsymbol{s}}$, as well as the structural equation model described by $\mathcal{G}_{\boldsymbol{Z}}$, does not have a causal meaning. In these cases, one can reason about conditional independence relations among input symbols.

\subsection{Multiple Attention Layers and Multi-Head Attention as a Structural Causal Model\label{sec:multi-attention}}

Commonly, the encoder module in the Transformer architecture \citep{vaswani2017attention} consists of multiple self attention layers executed sequentially. A non-linear transformation is applied to the the output embeddings of one attention layer before feeding it as input to the next attention layer. The non-linear transformation is applied to each symbol independently. In addition each attention layer consists of multiple self-attention heads, processing the input in parallel using head-specific attention matrix. The output embedding of each symbol is then linearly combined along the heads. It is important to note that embedding of each symbol is processed independently of embeddings of other symbols. The only part in which an embedding of one symbol is influenced by embeddings of other symbol is the multiplication by the attentnion matrix. 

Thus, a multi-layer, multi-head, architecture can be viewed as a deep graphical model, where exogenous nodes of an SCM are estimated by a previous attention layer. The different heads in an attention layer learn different graphs for the same input, where their output embeddings are linearly combined. This can be viewed as a mixture model. 
We recover the causal graph from the last (deepest) attention layer, and treat earlier layers as context estimation, which is encoded in the values of the exogenous nodes (\figref{fig:multilayer-SCM}).

\begin{figure}
  \centering
  \includegraphics[width=0.85\linewidth]{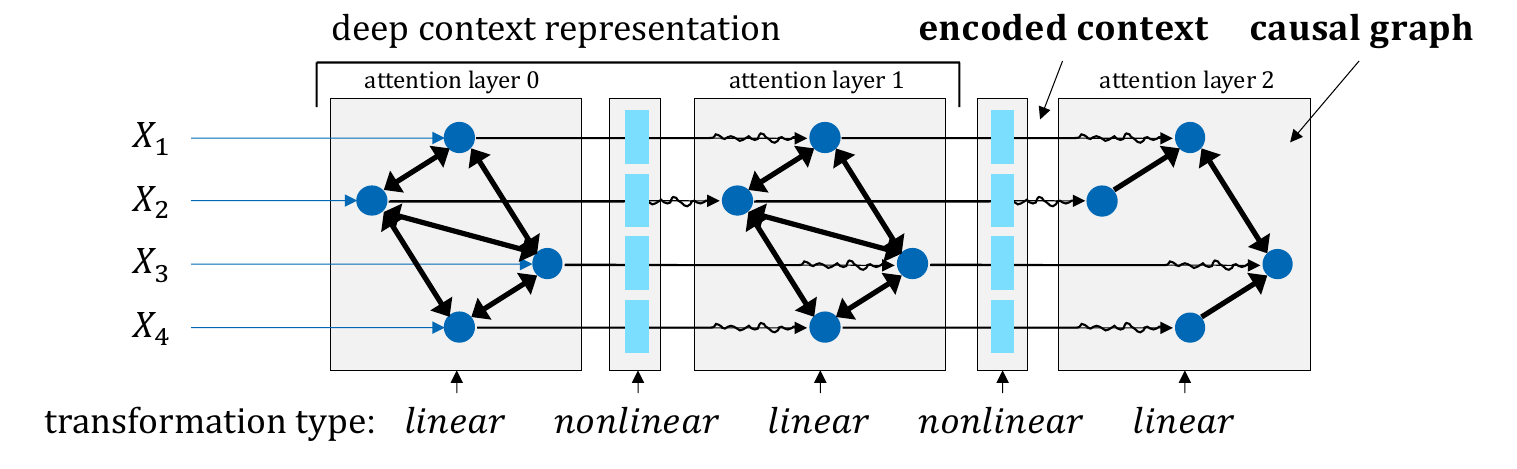}
  \caption{A network with three attention layers. Early attention layers (0 and 1) estimate context, which is encoded in the values of exogenous nodes for the last layer (2).} 

  \label{fig:multilayer-SCM}
\end{figure}

\pagebreak

\subsection{Limitations of ABCD}

The presented method, ABCD, is susceptible to two sources of error affecting its accuracy. The first is prediction errors of the Transformer neural network, and the second is errors from the constraint-based causal discovery algorithm that uses the covariance matrix calculated in the Transformer. The presence of errors from the second source depends on the first. For a Transformer with perfect generalization accuracy, and when causal Markov and faithfulness assumptions are not violated, the second source of errors vanishes, and a perfect result is returned by the presented ABCD method.

From \asmref{asm:unsupervised}, the deepest attention layer is expected to produce embeddings for each input symbol, such that the input symbol can be correctly predicted from its corresponding embedding (one-to-one embedding). However, except for trivial cases, generalization accuracy is not perfect. Thus, the attention matrix of an input sequence, for which the Transformer fails to predict the desired outcome, might include errors. As a result, the values of correlation coefficients calculated from the attention matrix might include errors. 

Constraint-based algorithms for causal discovery are generally proved to be sound and complete when a perfect CI-test is used. However, if a CI-test returns an erroneous result, the constraint-based algorithm might introduce additional errors. This notion is called stability \citep{spirtes2000}, which is informally measured by the number of output errors as a function of the number of CI-tests errors. In fact, constraint-based algorithm differ from one another in their ability to handle CI-test errors and minimize their effect. Thus, inaccurate correlation coefficients used for CI-testing might lead to erroneous independence relations, which in turn may lead to errors in the output graph.

We expect that as Transformer models become more accurate with larger training datasets, the accuracy of ABCD will increase.
In this paper's experiments we use existing pre-trained Transformers and common datasets, and use the ICD algorithm \citep{rohekar2021iterative} that was shown to have state-of-the-art accuracy.

\section{ABCD with Application To Explaining Predictions\label{sec:cleann}}

One possible application of the proposed ABCD approach is to reason about predictions by generating causal explanations from pre-trained self-attention models such as BERT \citep{devlin2018bert}. Specifically, we seek an explaining set that consists of a subset of the input symbols that are claimed to be solely responsible for the prediction. That is, if the effect of the explaining set on the prediction is masked, a different (alternative) prediction is provided by the neural network.

It was previously claimed that attention cannot be used for explanation \citep{jain2019attention}; however, in a contradicting paper \citep{wiegreffe2019attention}, it was shown that explainability is task dependent. 
A common approach is to use the attention matrix to learn about input-output relations for providing explanations \cite{seo2017interpretable, chen2019personalized, chen2018neural}. These often rely on the assumption that inputs having high attention values with the class token influence the output \cite{xu2015show, choi2016retain}.
We claim that this assumption considers only marginal statistical dependence and ignores conditional independence and explaining-away that may arise due to latent confounders.

To this end, we propose CLEANN (\textbf{C}ausa\textbf{L} \textbf{E}xplanations from \textbf{A}ttention in \textbf{N}eural \textbf{N}etworks). 
A description of this algorithm is detailed in \apxref{apx:findexplanations}, and an overview with application to explaining movie recommendation is given in \figref{fig:clear_approach}. See also \citet{nisimov2022clear} for more details.

\section{Empirical Evaluation\label{sec:experiments}}

In this section we demonstrate how a causal graph constructed from a self-attention matrix in a Transformer based model can be used to explain which specific symbols in an input sequence are the causes of the Transformer output.
We experiment on the tasks of sentiment classification, which classifies an input sequence, and recommendation systems, which generates a candidate list of recommended symbols (top-$k$) for the next item.
For both experiments in this section we compare our method against two baselines from \cite{tran2021counterfactual}: (1) \emph{Pure-Attention} algorithm (Pure-Atten.), that uses the attention weights directly in a hill-climbing search to suggest an explaining set, (2) \emph{Smart-Attention} algorithm (Smart-Attn.), that adapts \emph{Pure-Attention}, where a symbol is added to the explanation only if it reduces the score gap between the original prediction's score and the second-ranked prediction’s score.
Implementation tools are in \url{https://github.com/IntelLabs/causality-lab}.

\subsection{Evaluation metrics}

We evaluate the quality of inferred explanations using two metrics. One measures minimality of the explanation, and the other measures how specific is the explanation to the prediction.

\subsubsection{Minimal explaining set}

An important requirement is having the explaining set minimal in the number of symbols \citep{wachter2017}. The reason for that is twofold. (1) It is more complicated and less interpretable for humans to grasp the interactions and interplay in a set that contains many explaining symbols. (2) In the spirit of occum's razor, the explaining set should not include symbols that do not contribute to explaining the prediction (when faced with a few possibles explanations, the simpler one is the one most likely to be true). Including redundant symbols in the explaining set might result in a wrong alternative predictions when the effect of this set is masked.

\subsubsection{Influence of the explaining set on replacement prediction}

The following metric is applicable only to tasks that produce multiple predictions, such as multiple recommendations by a recommender system (e.g., which movie to watch next), as opposed to binary classification (e.g., sentiment) of an input. Given an input sequence consisting of symbols, $\boldsymbol{s}=\{s_1,\ldots,s_n\}$, a neural network suggests $\rec$, the 1\textsuperscript{st} symbol in the top-$k$ candidate list. CLEANN finds the smallest explaining set within $\boldsymbol{s}$ that influenced the selection of $\rec$. As a consequence, discarding this explaining set from that sequence should prevent that $\rec$ from being selected again, and instead a new symbol should be selected in replacement (replacement symbol). Optimally, the explaining set should influence the rank of only that 1\textsuperscript{st} symbol (should be downgraded), but not the ranks of the other candidates in the top-$k$ list. This requirement implies that the explaining set is unique and important for the isolation and counterfactual explanation of only that 1\textsuperscript{st} symbol, whereas the other symbols in the original top-$k$ list remain unaffected, for the most part. It is therefore desirable that after discarding the explaining set from the sequence, the new replacement symbol would be one of the original (i.e. before discarding the explaining set) top-$k$ ranked symbols, optimally the 2\textsuperscript{nd}.

\subsection{CLEANN for Sentiment classification}
\label{sec:sentiment_classification}
We exemplify the explainability provided by our method for  the task of sentiment classification of movie reviews from IMDB \citep{IMDB_dataset} using a pre-trained BERT model \citep{devlin2018bert} that was fine-tuned for the task. 
The goal is to propose an explanation to the classification of a review by finding the smallest explaining set of word tokens (symbols) within the review. \figref{fig:sentiment_analysis_explanation_example} exemplifies the explanation of an input review that was classified as negative by the sentiment classification model. We compare between our method and two other baselines on finding a minimal explaining set for the classification. We see that both baselines found the word `bad' as a plausible explanation for the negative sentiment classification, however in addition, each of them finds an additional explaining word (`pizza', `cinema') that clearly has no influence on the sentiment in the context it appears in the review. Contrary to that, our method finds the words `bad' and `but' as explaining the negative-review classification. Indeed, the word `but' may serve as a plausible explanation for the negative review at the 2\textsuperscript{nd} part of the sentence, since it negates the positive 1\textsuperscript{st} part of the sentence (about the pizza). Additionally, \figref{fig:sentiment_analysis_explanation_example}(b) shows the corresponding attention map at the last attention layer of the model for the given input review. In addition, in \figref{fig:sentiment_analysis_explanation_example}(c) we present the corresponding matrix representation of the learned graph (PAG). We can see that despite high attention values of the class token [cls] with some of the word tokens in the review (in the first row of the attention map), some of these words are found not to be directly connected to the class token in the causal graph produced by our method, and therefore may not be associated as influencing the outcome of the model.

\tabref{tab:sentiment_analysis_table} and \figref{fig:sentiment_analysis_set_size} show how the length of a review influences the explaining set size. CLEANN produces the smallest explaining sets on average, where statistical significance was tested using Wilcoxon signed-ranks test at significance level $\alpha=0.01$.
For an increasing review length it is evident that the two baselines increase their explaining set, correspondingly. Even if the additional explaining tokens are of negative/positive context, they are not the most essential and prominent word token of this kind. Contrary to that, the explaining set size produced by our method is relatively stable, with a moderate increase in the explaining set size, meaning that it keeps finding the most influential words in a sentence that dominates the decision put forward by the model.

\begin{figure*}[th]
  \centering
  \includegraphics[width=\linewidth]{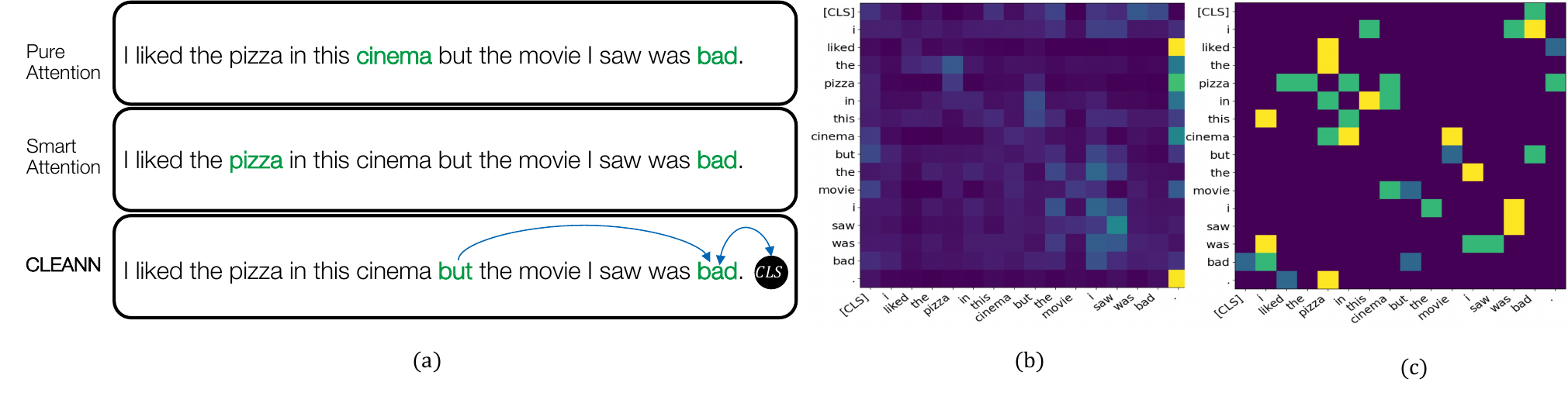}
  \caption{Comparison between the explaining sets produced by the methods given a negative review (a) Notice the implausible explaining tokens `cinema' and 'pizza' found by the baseline methods. In contrast, in addition to the token `bad', CLEANN found the token `but', which is a plausible explanation as it is a negation to positive part of the sentence that precedes it. (b) Attention matrix produced for the review by the model. (c) A matrix representation of the learned graph (PAG) produced by CLEANN (green: arrow head, yellow: arrow tail, blue: circle mark).}

  \label{fig:sentiment_analysis_explanation_example}
\end{figure*}

\begin{table}
\centering
\small
\begin{tabular}{l|cccc}
    \toprule
    & \multicolumn{4}{c}{Review Length} \\
    Algorithm & 20 & 30 & 40 & 60\\
     \midrule
    Pure-Attention  & 5.52 (4.34)          & 8.88 (7.59)            & 12.71 (10.30)         & 13.14 (12.47)\\
    Smart-Attention & 3.62 (2.45)          & 4.63 (3.11)            & 6.03 (3.34)           & 6.41 (4.74)\\
    CLEANN          & \textbf{2.82} (1.71) & \textbf{3.48} (2.18)   & \textbf{3.58} (1.83)  & \textbf{3.53} (1.54)\\
    \bottomrule
\end{tabular}
\vspace{10pt}
\caption{Mean and standard deviation (in parenthesis) of explanation set sizes for different review lengths (number of tokens in the input sequence). Each column is calculated using 25,000 reviews. CLEANN provides explaining sets having the smallest size on average, with statistical significance tested using the Wilcoxon signed-ranks test at significance level $\alpha=0.01$ (indicated in bold).}
\label{tab:sentiment_analysis_table}
\end{table}

\begin{figure}[h]
  \centering 
  
  \includegraphics[width=0.24\textwidth]{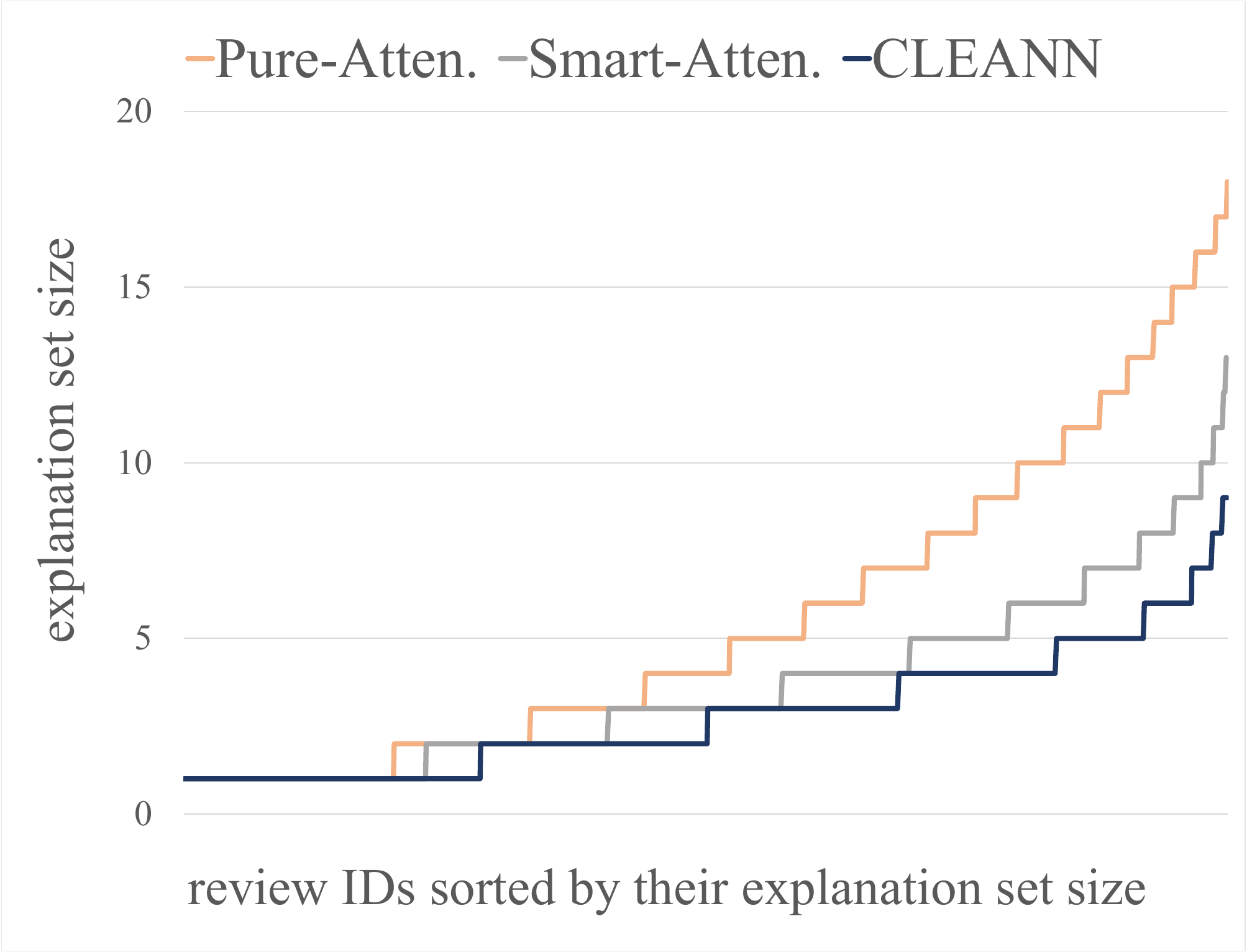}
  \includegraphics[width=0.24\textwidth]{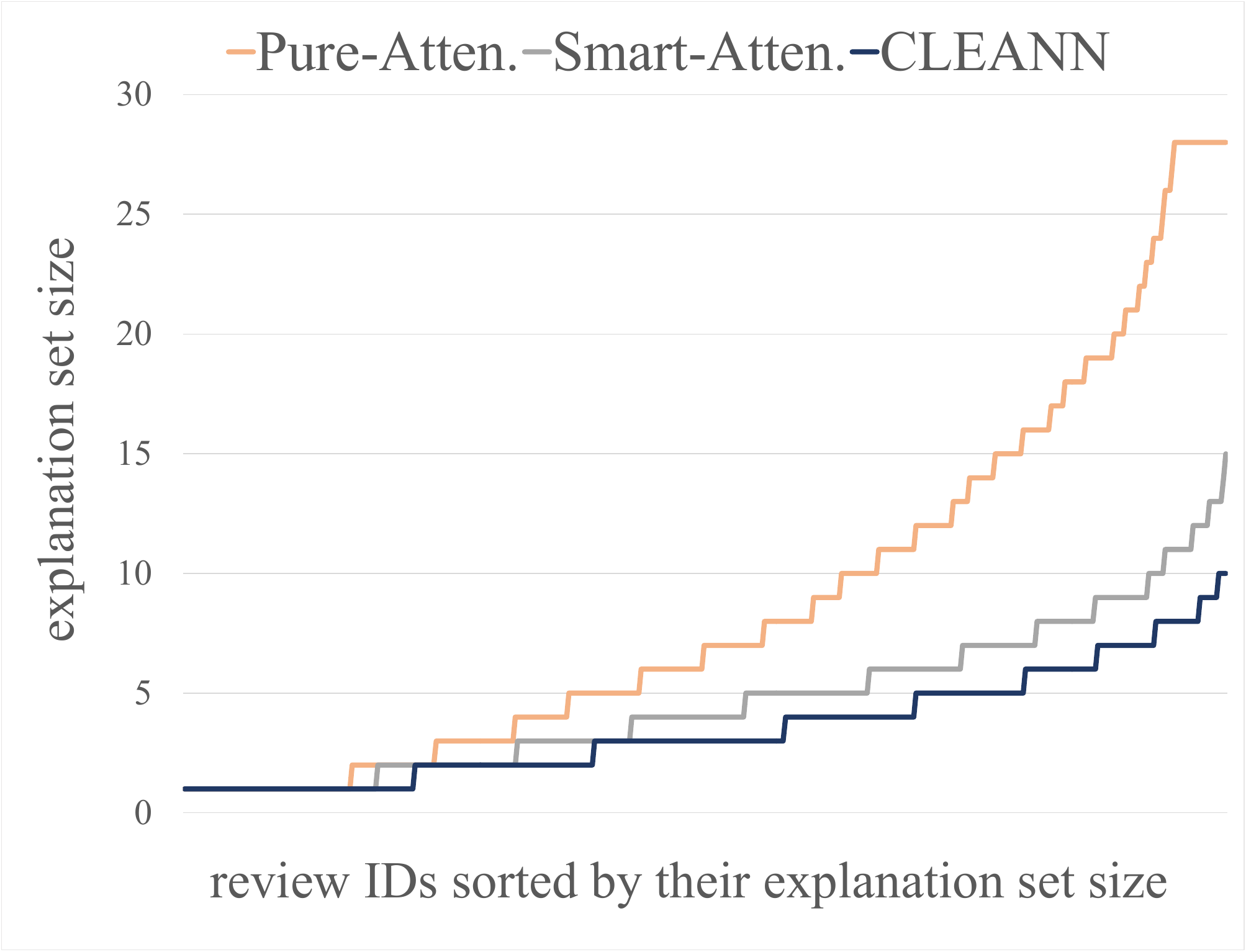}
  \includegraphics[width=0.24\textwidth]{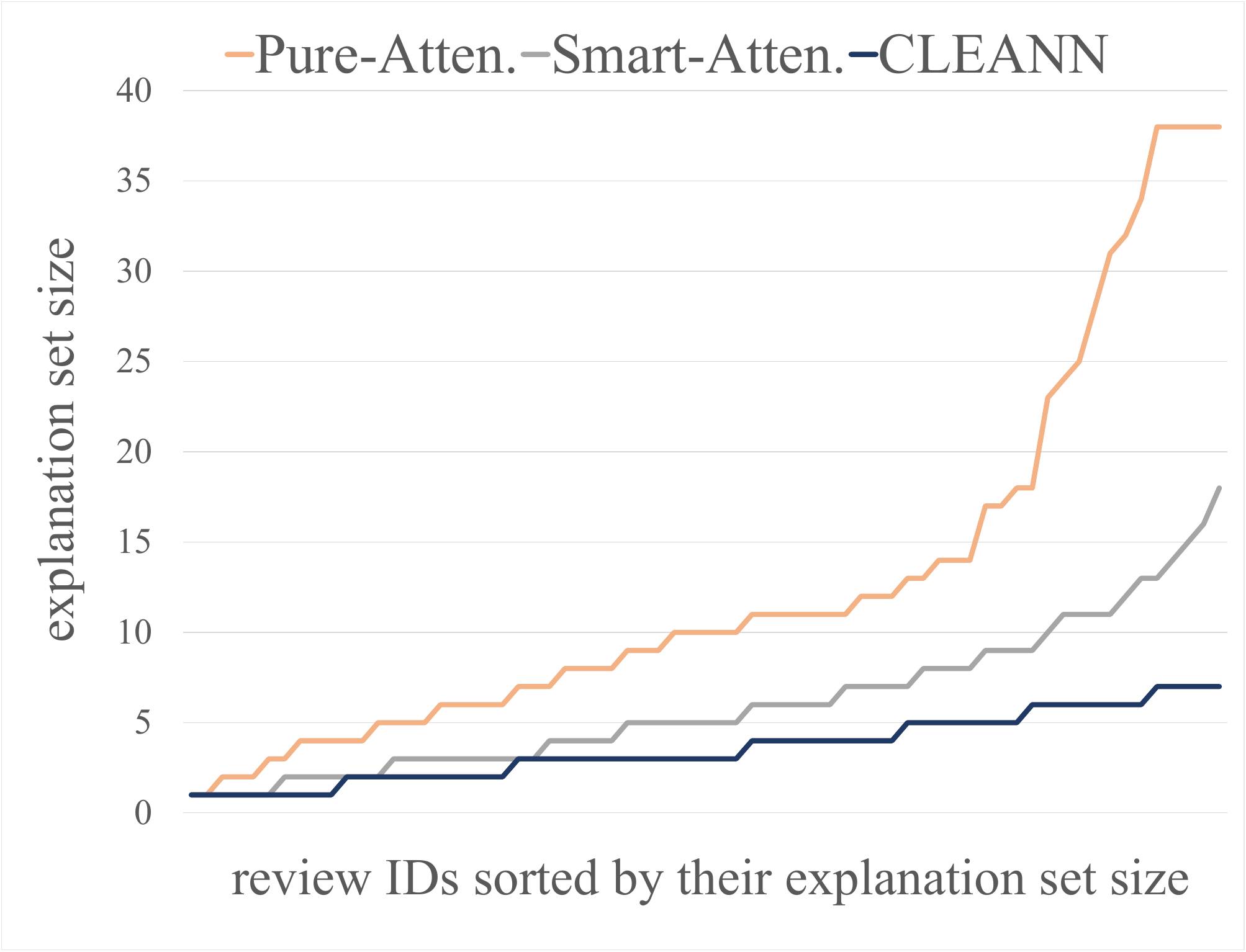}
  \includegraphics[width=0.24\textwidth]{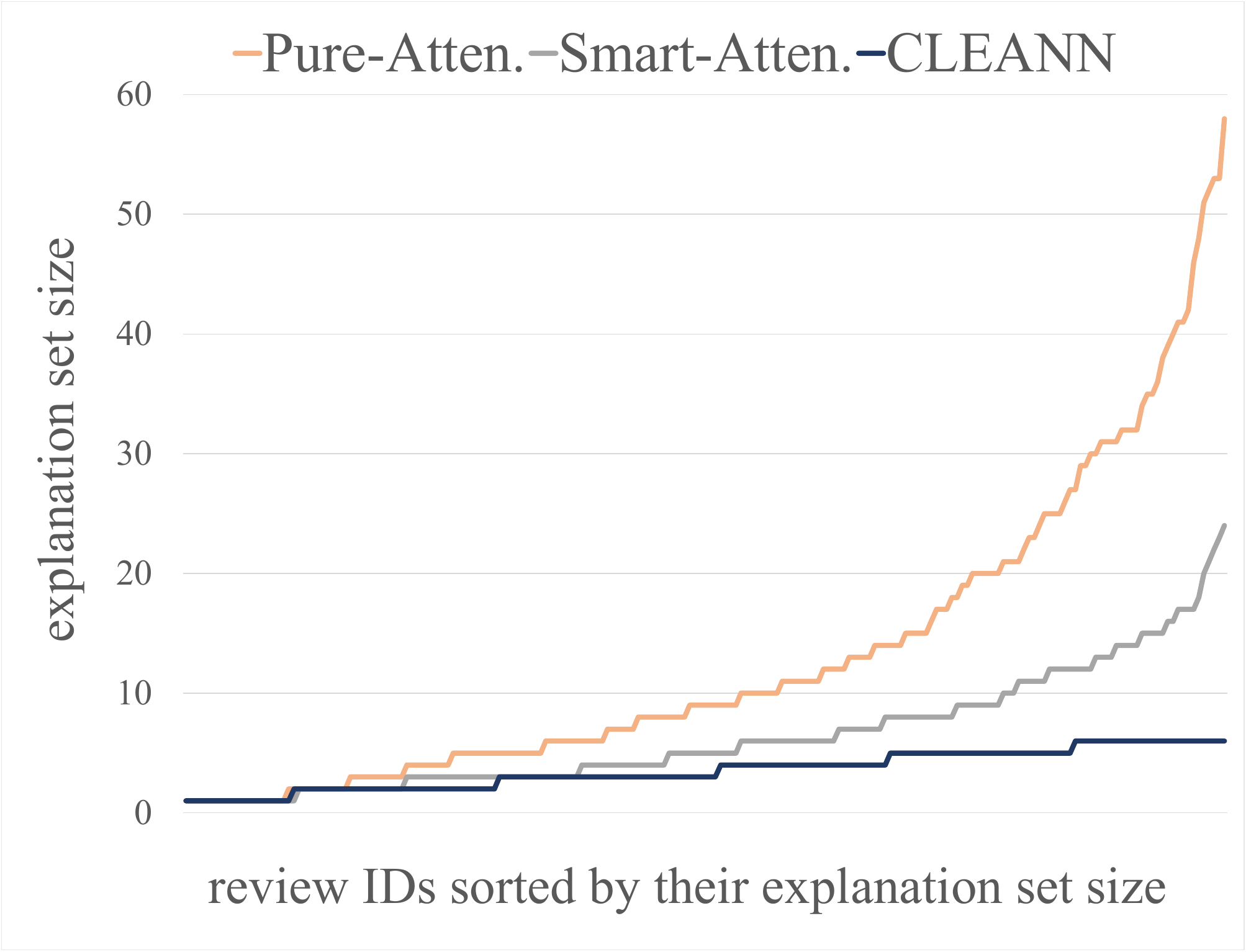}
  \\
  (a) \qquad\qquad\qquad\qquad (b) \qquad\qquad\qquad\qquad (c) \qquad\qquad\qquad\qquad (d)
  \caption{Comparison between CLEANN and the baseline methods on the distribution of explaining set size over the various reviews. The reviews are sorted by their corresponding explanation size on the horizontal axis. CLEANN consistently finds smaller sets for: (a) review length = 20, (b) review length = 30, (c) review length = 40, (d) review length = 60.}
  \label{fig:sentiment_analysis_set_size}

\end{figure}

\begin{figure}[th]
  \centering
  \includegraphics[width=\linewidth]{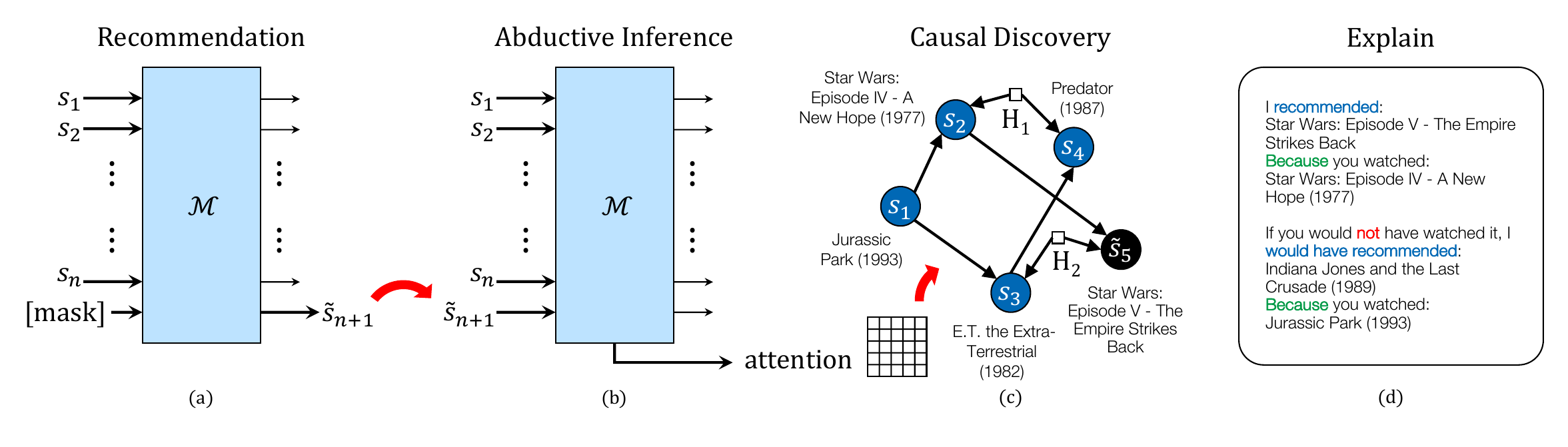}
  \caption{An overview of the presented approach. As an example, $\mathcal{M}$ is a pre-trained BERT4Rec recommender \cite{sun2019bert4rec}. (a) Given a set of $n$ movies (symbols) $[s_1,\ldots,s_n]$ the user interacted with, and the $(n+1)^{th}$ movie masked, the recommender $\mathcal{M}$ recommends $\Tilde{s}_{n+1}$. Next, in the abductive inference stage (b) the recommendation is treated as input in addition to the observed user-movie interactions, that is, input: $[s_1,\ldots,s_n, \Tilde{s}_{n+1}]$, and the attention matrix is extracted. (c) A causal structure learning stage in which the attention matrix is employed for testing conditional independence in a constraint-based structure learning algorithm. The result is a Markov equivalence class. Here, as an example, a causal DAG (a member of the equivalence class) is given, where $H_1$ and $H_2$ are latent confounders. (d) A causal explanation reasoned from the causal graph.} 
  \label{fig:clear_approach}
\end{figure}

\subsection{CLEANN for recommendation system}

We assume that the human decision process, for selecting which items to interact with, consists of multiple decision pathways that may diverge and merge over time. Moreover, they may be influenced by latent confounders along this process. Formally, we assume that the decision process can be modeled by a causal DAG consisting of observed and latent variables. Here, the observed variables are user-item interactions $\{s_1,\ldots,s_n\}$ in a session $S$, and latent variables $\{H_1, H_2, \ldots\}$ represent unmeasured influences on the user's decision to interact with a specific item. Examples for such unmeasured influences are user intent and previous recommendation slates presented to the user.

We exemplify the explainability provided by our method for  the task of a recommendation system, and suggest it as a means for understanding of the complex human decision-making process when interacting with the recommender system. Using an imaginary session that includes the recommendation, we extract an explanation set from the causal graph. To validate this set, we remove it from the original session and feed the modified session into the recommender, resulting in an alternative recommendation that, in turn, can also be explained in a similar manner. \figref{fig:clear_approach} provides an overview of our approach.

For empirical evaluation, we use the BERT4Rec recommender \cite{sun2019bert4rec}, pre-trained on the MovieLens 1M dataset \cite{movielens} and estimate several measures to evaluate the quality of reasoned explanations.

\textbf{Minimal explaining set:} \figref{fig:clear_set_size}(a) compares the explaining set size for the various sessions produced by CLEANN and the baseline methods. It is evident that the set sizes found by CLEANN are smaller. \figref{fig:clear_set_size}(b) shows the difference between the explaining set sizes found by the baseline method and CLEANN, as calculated for each session, individually. Approximately 25\% of the sessions are with positive values, indicating smaller set sizes for CLEANN, zero values shows equality between the two, and only 10\% of the sessions are with negative values, indicating smaller set sizes for Pure-Attention.

\textbf{Influence of the explaining set on replacement recommendation:} 
\figref{fig:clear_positions}(a) compares the distribution for positions of replacement recommendations, produced by CLEANN and the baseline methods, in the \emph{original} top-5 recommendations list. 
It is evident that compared to the baseline methods, CLEANN recommends replacements that were ranked higher (lower position) in the original top-5 recommendations list. In a different view, \figref{fig:clear_positions}(b) shows the relative gain in the number of sessions for each position, achieved by CLEANN compared to the baseline methods. There is a trend line indicating higher gains for CLEANN at lower positions. That is, the replacements are more aligned with the original top-5 recommendations. CLEANN is able to isolate a minimal explaining set that influences mainly the 1\textsuperscript{st} item from the original recommendations list.

\begin{figure}[h]
  \centering
  
  (a) \includegraphics[width=0.4\textwidth]{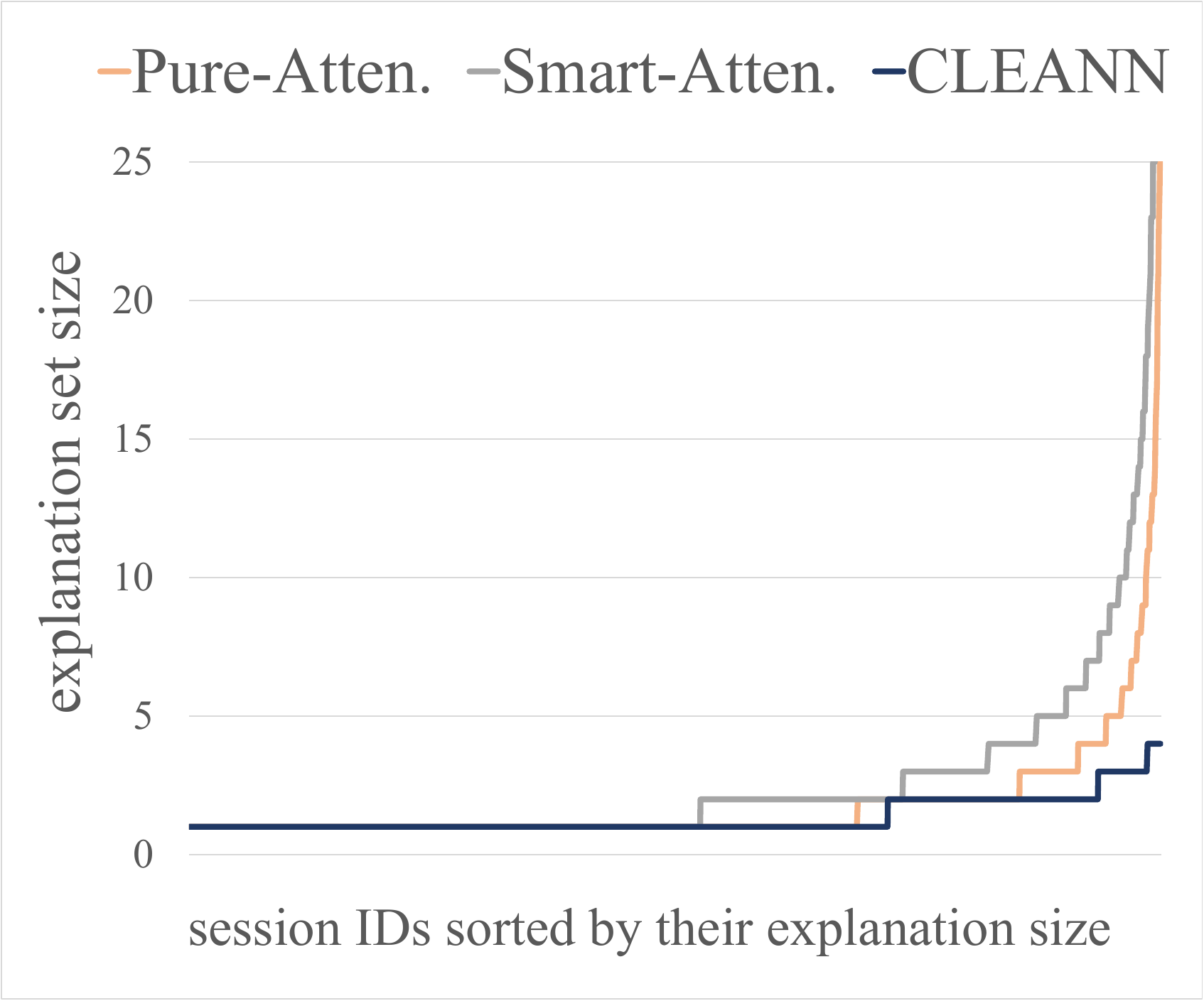}\qquad
  (b) \includegraphics[width=0.4\textwidth]{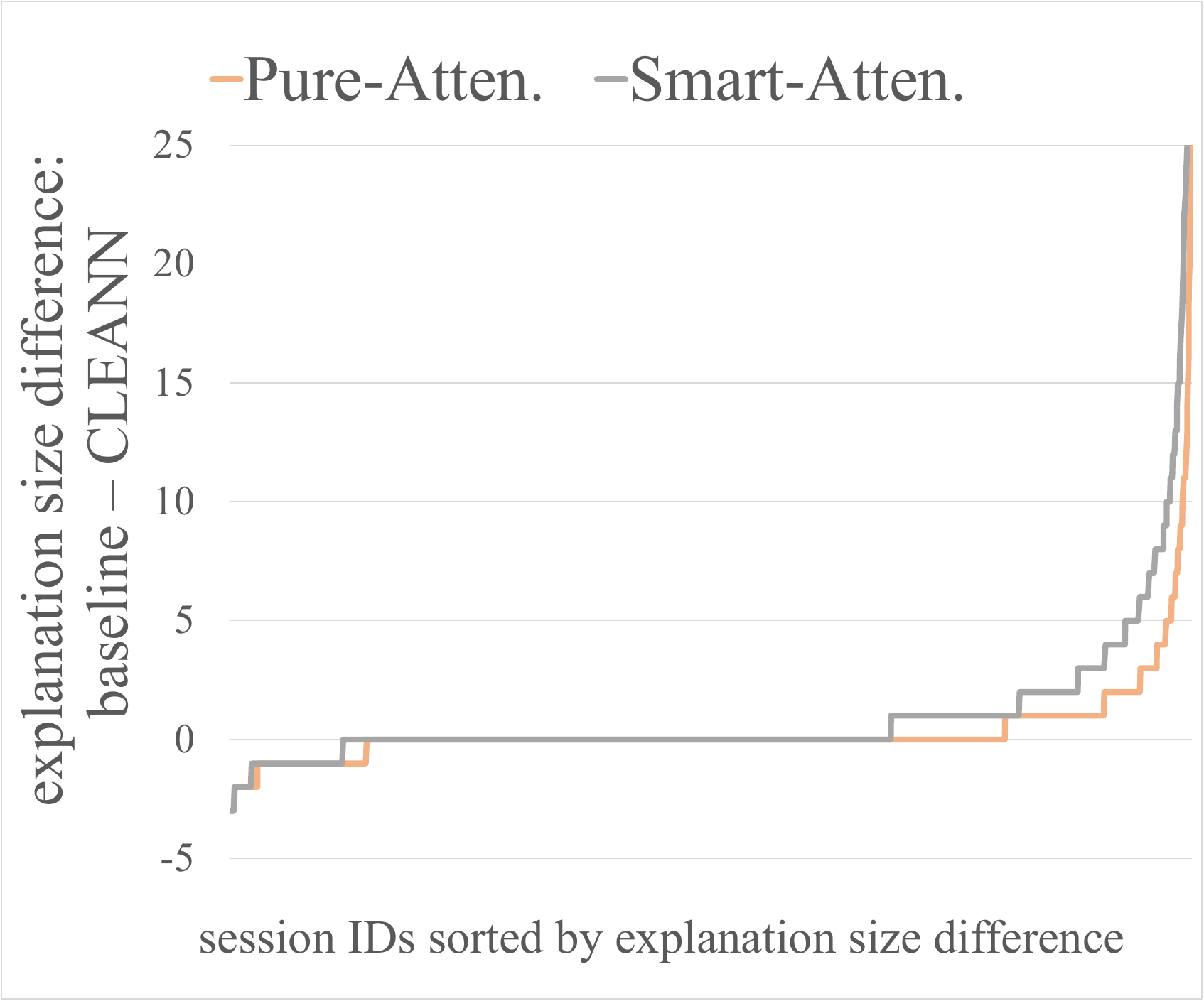}
  
  \caption{Explaining set size. (a) Comparison between CLEANN and the baseline methods on the explaining set size (sorted by size along the horizontal axis) for the various sessions. Set sizes found by CLEANN are smaller. (b) The difference between set sizes found by the baseline method and CLEANN, presented for each individual session. Positive values are in favor of CLEANN.}
  \label{fig:clear_set_size}
\end{figure}

\begin{figure}[h]
  \centering

  (a) \includegraphics[width=0.4\textwidth]{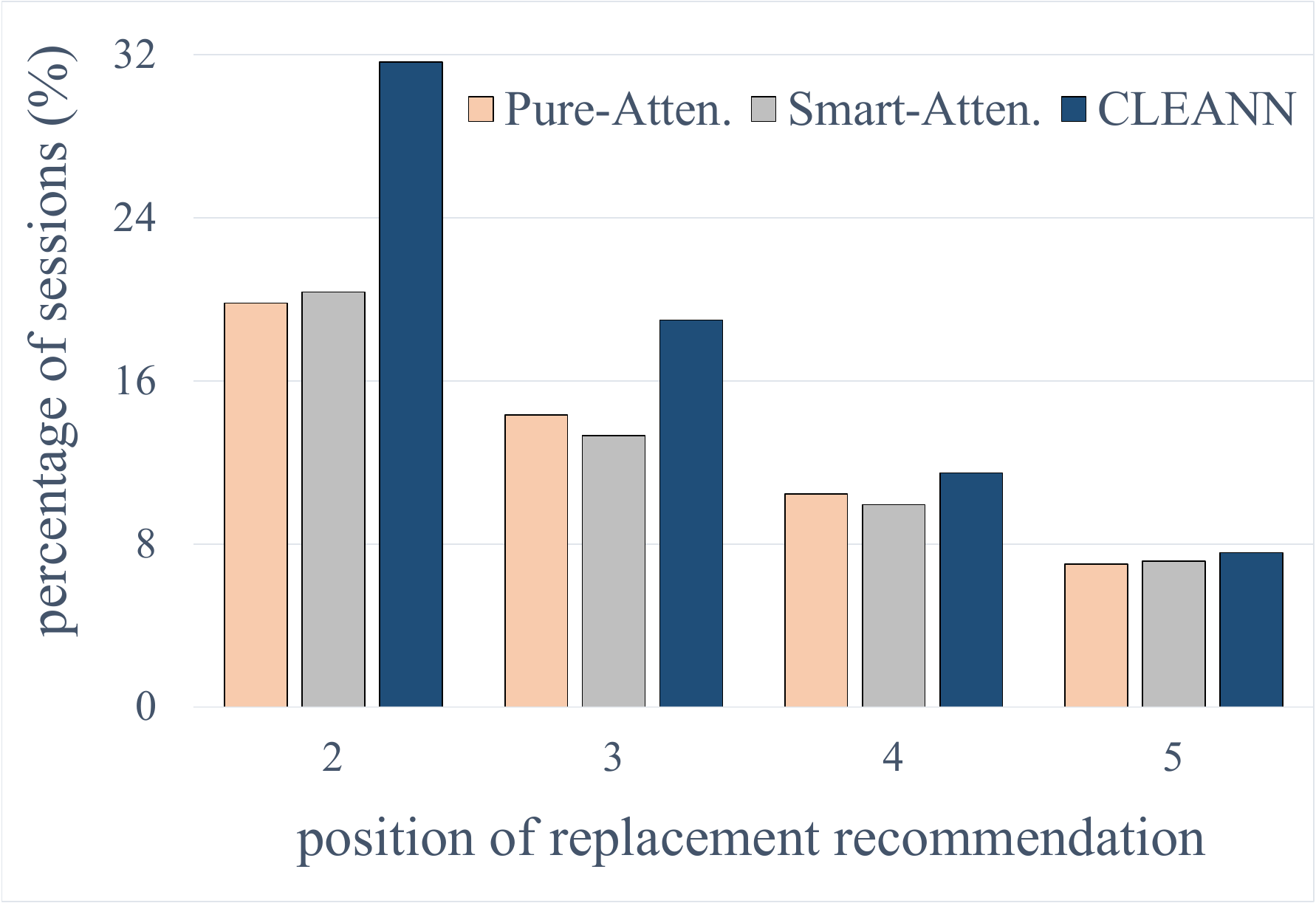}\qquad
  (b) \includegraphics[width=0.4\textwidth]{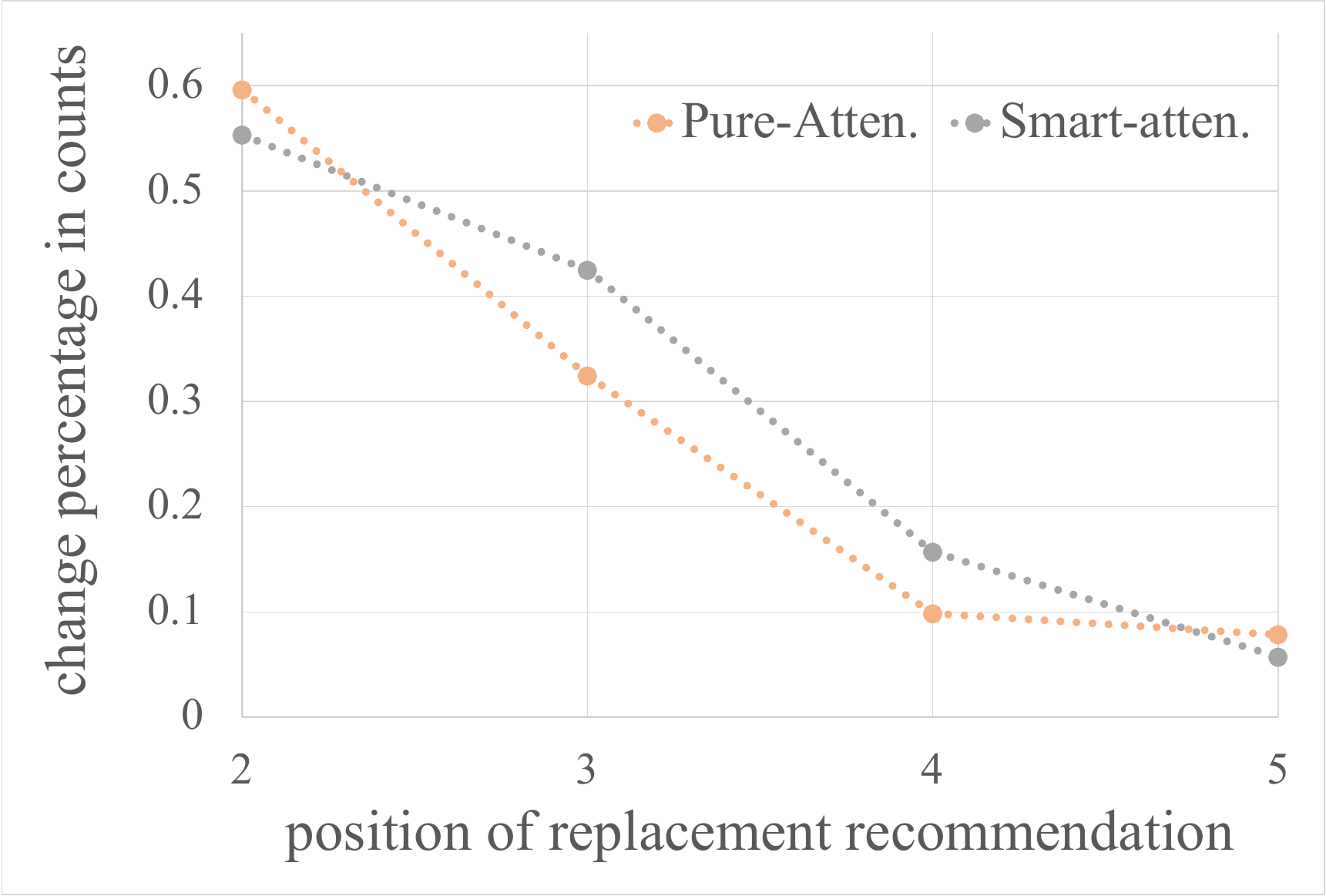}
  
  \caption{Positions of the replacement recommendations within the original top-5 recommendations list. (a) Position distribution of the replacements: compared to the baseline methods, CLEANN recommends replacements which are ranked higher in the original top-5 recommendations. (b) The relative gain in the number of sessions for each position, achieved by CLEANN. There is a trend line indicating higher gains for CLEANN at lower positions (replacements are more aligned with the original top-5 recommendation).}
  \label{fig:clear_positions}
\end{figure}

\section{Discussion\label{sec:discussion}}

We presented a relation between the self-attention mechanism used in the Transformer neural architecture and the structural causal model often used to describe the data-generating mechanism.

One result from this relation is that, under certain assumptions, a causal graph can be learned for a single input sequence (ABCD). This can be viewed as utilizing pre-trained models for \emph{zero-shot causal-discovery}.
An interesting insight is that the only source of errors while learning the causal structure is in the estimation of the attention matrix. Estimation is learned during pre-training the Transformer. Since in recent years it was shown that Transformer models scale well with model and training data sizes, we expect the estimation of the attention matrix to be more accurate as the pre-training data increases, thereby improving the accuracy of causal discovery.

Another result is that the causal structure learned for an input sequence can be employed to reason about the prediction for that sequence by providing \emph{causal explanations} (CLEANN).
We expect learned causal graphs to be able to answer a myriad of causal queries \cite{pearl2009causality}, among these are personalized queries that can allow a richer set of predictions. For example, in recommender systems, assuming that the human decision process consists of multiple pathways that merge and split, by identifying independent causal pathways, recommendations can be provided for each one, independently.

The results of this paper contribute to the fields of causal inference and representation learning by providing a relation that bridges concepts from these two domains. For example, they may lead to 1) causal reasoning in applications for which large-scale pre-trained models or unlabeled data exist, and 2) architectural modifications to the Transformer to alleviate causal inference.

\bibliography{ABCD_NeurIPS_arXiv_2023}

\newpage

\appendix

\setcounter{thm}{0}
\setcounter{asm}{0}
\setcounter{dff}{0}

\section{Notations\label{apx:notations}}

In this section we provide \tabref{tab:notations}, which describes the main symbols used in the paper.

\begin{table}[h]
  \caption{Notations used in the paper. The first set of symbols describes entities in the structural causal model, and the second set of symbols describes entities in the Transformer neural network.}
  \label{tab:notations}
  \centering
  \begin{tabular}{cl}
    \toprule
    Symbol     & Description     \\
    \midrule
    $X_i$ & a random variable representing node $i$ in an SCM \\
    $U_i$     & latent exogenous random variable $i$ in an SCM \\
    $\mathbf{G}$     & weighted adjacency matrix of an SCM \\
    $\mathcal{G}$   & causal graph (unweighted, directed-graph structure) \\
    \midrule
    $\boldsymbol{Z}_i$ & output embedding of input symbol $i$, $\boldsymbol{Z}_i \equiv \mathbf{Z}(i,\wcdot)$,  in attention layer\\
    $\boldsymbol{V}_i$ & value vector corresponding to input $i$, $\boldsymbol{V}_i \equiv \mathbf{V}(i,\wcdot)$, in attention layer\\
    $\mathbf{A}$     & attention matrix \\
    $\mathcal{M}$   & Transformer neural network \\
    $\mathbf{W}_{V}, \mathbf{W}_{QK}$   & learnable weight matrices in the Transformer neural network \\ 
    \bottomrule
  \end{tabular}
\end{table}

\section{Finding Possible Explanations from a Causal Graph\label{apx:findexplanations}}

This appendix includes the pseudo-code for the CLEANN (\textbf{C}ausa\textbf{L} \textbf{E}xplanations from \textbf{A}ttention in \textbf{N}eural \textbf{N}etworks) algorithm and explanatory figures. 
By abduction, we replace the masked symbol with a predicted one, creating an imaginary sequence. We then feed-forward this sequence and extract the resulting attention matrix (\algref{alg:cleann}-lines 2--4). Using this attention matrix we learn a causal graph as discussed in Section 3.2 (\algref{alg:cleann}-lines 5--7).

Given a causal graph, various ``\emph{why}'' questions can be answered \cite{pearl2018book}. We follow \cite{tran2021counterfactual} explaining a prediction using the symbols from the input sequence. That is, provide the minimal set of symbols that led to a specific prediction and provide an alternative prediction. To this end, we define the following.  

\begin{dff}[PI-path]\label{dff:i-path}
    A potential influence path from $A$ to $Z$ in PAG $\pag$, is a path $\ipath{A}{Z}=\langle A,\ldots,Z\rangle$, such that for every sub-path $\langle U,V,W \rangle$ of $\ipath{A}{Z}$, where there are arrow heads into $V$ and there is no edge between $U$ and $W$ in $\pag$.
\end{dff}

Essentially, a PI-path ensures dependence between its two end points when conditioned on every node on the path (a single edge is a PI-path).

\begin{dff}[PI-tree]\label{dff:i-tree}
    A potential influence tree for $A$ given causal PAG $\pag$, is a tree $\mathcal{T}$ rooted at $A$, such that there is a path from $A$ to $Z$ in $\mathcal{T}$, $\langle A, V_1,\ldots, V_k, Z\rangle$, if and only if there is an PI-path $\langle A,V_1,\ldots, V_k,Z \rangle$ in $\pag$.
\end{dff}

\begin{dff}[PI-set]\label{dff:ip-conditions}
    A set $\boldsymbol{E}$ is a potentially influencing set (PI-set) on item $A$, with respect to $r$, if and only if:
    \begin{enumerate}
        \item $|\boldsymbol{E}| = r$
        \item $\forall E\in\boldsymbol{E}$ there exists a PI-path, $\ipath{A}{E}$ such that $\forall V\in\ipath{A}{E}, V\in\boldsymbol{E}$
        \item $\forall E\in\boldsymbol{E}$, $E$ temporally precedes $A$.
    \end{enumerate}
\end{dff}

An example for identifying PI-sets is given in \figref{fig:LatentMarkovBlanket}. Although a PI-set identify nodes that are conditionally dependent on the prediction, it may not be a minimal explanation for the specific prediction. That is, omitting certain symbols from the session alters the graph and may render other items independent of the prediction. Hence, we provide an iterative procedure (``FindExplanation'' in \algref{alg:cleann}-lines 10--18), motivated by the ICD algorithm \cite{rohekar2021iterative}, to create multiple candidate explanations by gradually increasing the value of $r$. See an example in \figref{fig:search-radius}. Note that from conditions (1) and (2) of \dffref{dff:ip-conditions}, $r$ effectively represents the maximal search radius from the prediction (in the extreme case, the items of the PI-set lie on one PI-path starting at the prediction). The search terminates as soon as a set qualifies as an explanation (\algref{alg:cleann}-line 17). That is, as soon as a different prediction is provided. If no explanation is found (\algref{alg:cleann}-line 19) a higher value of $\alpha$ should be considered for the algorithm.

\begin{algorithm2e}
\SetKwInput{KwInput}{Input}                
\SetKwInput{KwOutput}{Output}              
\SetKw{Break}{break}
\DontPrintSemicolon
  
  \BlankLine
  
  \KwInput{
    \\\quad $\boldsymbol{s}$: a sequence $\boldsymbol{s}=[s_1,\ldots,s_n,\langle mask \rangle]$
    \\\quad $\recsys$: a pre-trained attention-based model
    \\\quad $\srec$: prediction by $\recsys$
    \\\quad $\alpha$: significance level for CI-testing
    }
    
    \BlankLine
    
  \KwOutput{
  \\\quad $\boldsymbol{E}$: an explanation set for the prediction
  \\\quad $\sreppot$: an alternative prediction (after masking the effect of the explanation)
  \\\quad $\pag$: a causal graph of $[s_1,\ldots,s_n, \srec]$}

  \SetKwFunction{FMain}{Main}
  \SetKwFunction{FindExp}{FindExplanation}
  \SetKwFunction{ICD}{ICD}
  \SetKwFunction{CausalDiscovery}{CausalDiscovery}
  \SetKwFunction{FIter}{CDIteration}
  \SetKwFunction{FPDSepRange}{PDSep\_r}

\BlankLine\BlankLine

  \SetKwProg{Fn}{Function}{:}{}
  \Fn{\FMain{$\boldsymbol{s}$, $\recsys$, $\srec$}}{
        $\Tilde{\boldsymbol{s}} \leftarrow [s_1,\ldots,s_n, \srec]$ \Comment*{replace mask with prediction}
        forward pass: $\recsys(\Tilde{\boldsymbol{s}})$\;
        get $\Tilde{\mathbf{A}}$, the attention matrix in the last layer of $\recsys(\Tilde{\sess})$\;
        define correlation matrix $\boldsymbol{\rho}_{\Tilde{\sess}} : \boldsymbol{\rho}_{\Tilde{\sess}}({i,j})=\mathbf{C}({i,j}) / \sqrt{\mathbf{C}({i,i}) \mathbf{C}({j,j})}$, where $\mathbf{C} = \Tilde{\mathbf{A}} \Tilde{\mathbf{A}}^{\top}$ \;
        define $\sindep(\boldsymbol{\rho}_{\Tilde{\sess}})$ : a conditional independence test based on partial correlation \;
        $\pag \leftarrow$ \CausalDiscovery($\sindep(\boldsymbol{\rho}_{\Tilde{\sess}})$, $\alpha$) \;
        $\boldsymbol{E}, \sreppot \leftarrow$ \FindExp($\pag$, $\srec$, $\recsys$, $\boldsymbol{s}$) \;
        \KwRet $\boldsymbol{E}$, $\sreppot$, $\pag$\;
  }

  \BlankLine\BlankLine

  \SetKwProg{Fn}{Function}{:}{\KwRet}
  \Fn{\FindExp{$\pag$, $\srec$, $\recsys$, $\boldsymbol{s}$}}{
        create $\mathcal{T}$: a PI-tree for $\srec$ given $\pag$ \;
        \For{$r$ ~in~ $\{1, \ldots, n-1\}$}{
            create $\mathcal{E}$, subsets of $\mathbf{Nodes}(\mathcal{T})\setminus \srec$ such that $\forall \boldsymbol{E}\in\mathcal{E}$, $\boldsymbol{E}$ is a PI-set on $\srec$ in $\pag$ with respect to $r$. \;
            \For{$\boldsymbol{E}\in\mathcal{E}$}{
                $\boldsymbol{s}' \leftarrow \boldsymbol{s} \setminus \boldsymbol{E}$\;
                $\sreppot \leftarrow \recsys(\boldsymbol{s}')$ \Comment*{get an alternative prediction}
                \If{$\sreppot \neq \srec$}{
                    \KwRet $\boldsymbol{E}$, $\sreppot$ \Comment*{return immediately ensuring smallest explanation set}
                }
            }
        }
        \KwRet $\emptyset$, $\emptyset$ \Comment*{no explanation found at the current significance level}
    }

  \caption{CLEANN: CausaL Explanations from Attention in Neural Networks}
  \label{alg:cleann}
\end{algorithm2e}

\begin{figure}[h]
  \centering
  \includegraphics[width=\linewidth]{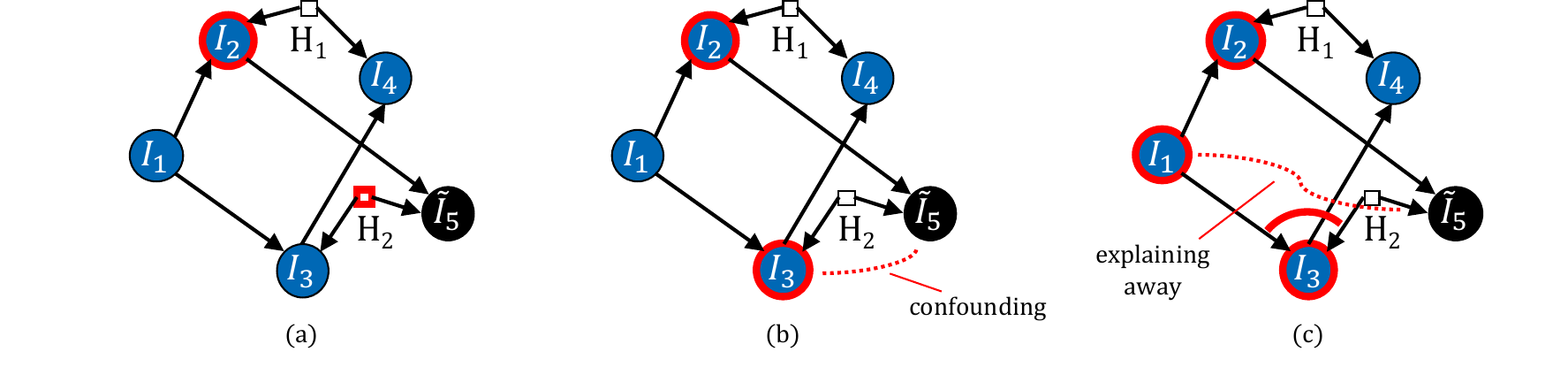}
  \caption{An example for a PI-set on $\Tilde{I}_5$. Latent nodes are represented by squares. Nodes included in the PI-set are circled with a thick red line. Indirect dependency is marked with a dashed red line. (a) The smallest PI-set for a fully observed DAG is the Markov blanket $\{I_2, H_2\}$. However, $H_2$ is latent. (b) Nodes ${I_2, I_3}$ are included in the PI-set as they have direct effect on $\Tilde{I}_5$, where $I_5$ depends on $I_3$ due to the latent confounder $H_2$. However, $I_3$ is a collider, hence including it in the PI-set results in explaining-away and the PI-set is not complete. (c) A complete PI-set $\{I_1, I_2, I_3\}$. Including $I_3$ creates an indirect dependence between $I_5$ and $I_1$ (a PI-path), which requires including $I_1$ as well. Note that the distance from $I_5$ to $I_1$ is two edges in the graph over observed variables.}
  \label{fig:LatentMarkovBlanket}
\end{figure}

\begin{figure}[h]
  \centering
  \includegraphics[width=0.75\linewidth]{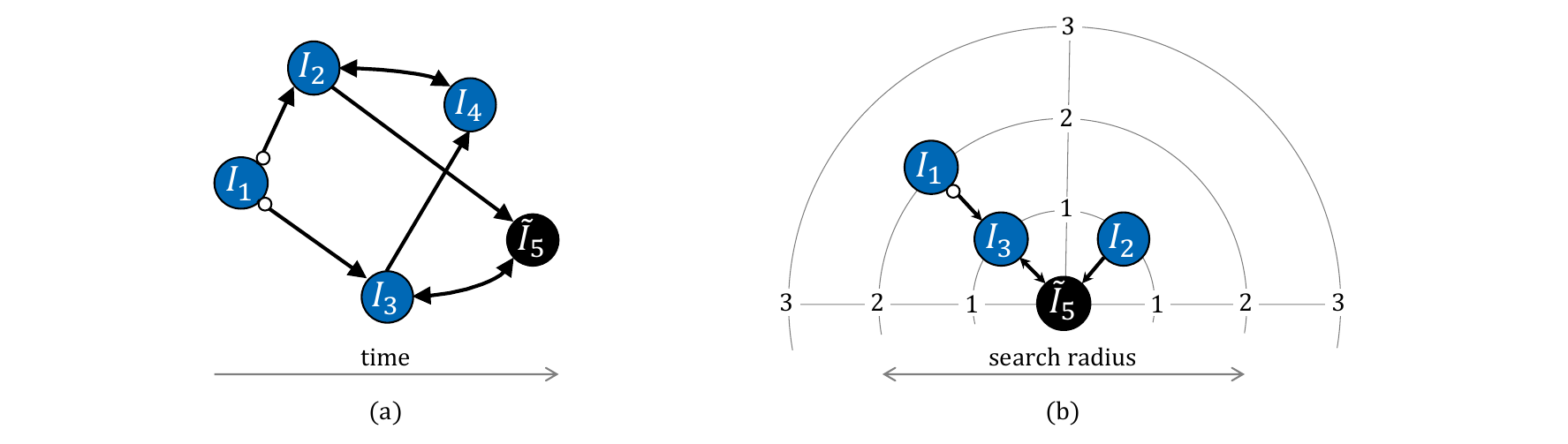}
  \caption{An example for (a) a causal graph (PAG), and (b) the nodes that influence the prediction $\Tilde{I}_5$ having a PI-tree structure, depicted on coordinates where the radius axis is the distance from the prediction. In \algref{alg:cleann}-line 12, the algorithm iterates over radius values, starting with 1. In the first iteration, the sets $\{I_2\}$ and $\{I_3\}$ are tested if any of them qualifies as an explanation. If both sets fail, the search radius increases to 2 and the sets $\{I_2, I_3\}$ and $\{I_1, I_3\}$ are tested (set $\{I_2, I_3\}$ is tested first as it members have smaller average distance from $\Tilde{I_5}$). If both fail, $r=3$ and the last tested set is $\{I_1, I_2, I_3\}$.}
  \label{fig:search-radius}
\end{figure}

\section{Proofs}

In this section we provide proofs for the results described in \secref{sec:abcd}. Let $(\Omega_n, \mathcal{F}_n, P_n)$ be an SCM-consistent probability space, and $\mathcal{M}$ trained on sequences from this probability space, complying with \asmref{asm:unsupervised}.

\begin{thm}
    If $\boldsymbol{s}$ is a sequence from an SCM-consistent probability space used to train $\mathcal{M}$, $\mathcal{G}_{\boldsymbol{s}}$ is the causal graph of the SCM from which $\boldsymbol{s}$ was generated, and $\mathbf{A}$ the attention matrix for input sequence $\boldsymbol{s}$ in the deepest layer, then $\mathcal{G}_{\mathbf{s}}$ is in the equivalence class learned by a sound constraint-based algorithm that uses covariance matrix $\mathbf{C}_{\mathbf{Z}}=\mathbf{A}\mathbf{A}^{\top}$ for partial correlation CI-testing.
\end{thm}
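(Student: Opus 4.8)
The plan is to show that the partial-correlation CI-test built from $\mathbf{C}_{\mathbf{Z}}=\mathbf{A}\mathbf{A}^{\top}$ acts as a \emph{perfect} conditional-independence oracle for the d-separation structure of $\mathcal{G}_{\boldsymbol{s}}$, and then to invoke the soundness of the constraint-based algorithm to conclude that $\mathcal{G}_{\boldsymbol{s}}$ lies in the returned equivalence class. First I would appeal to the analysis of \secref{sec:attention2scm}: by \eqref{eq:cov-analogy}, $\mathbf{C}_{\mathbf{Z}}$ is exactly the covariance matrix of a linear-Gaussian SCM with some graph $\mathcal{G}_{\boldsymbol{Z}}$ and weight matrix $\mathbf{G}$, in which $\mathbf{A}$ plays the role of $(\mathbf{I}-\mathbf{G})^{-1}\mathbf{\Lambda}$. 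For any linear-Gaussian model a partial correlation $\rho_{i,j\cdot\boldsymbol{X}'}$ computed from the covariance matrix vanishes if and only if $X_i \indep X_j \mid \boldsymbol{X}'$, and under the causal Markov and faithfulness assumptions this holds if and only if $X_i$ and $X_j$ are d-separated by $\boldsymbol{X}'$ in $\mathcal{G}_{\boldsymbol{Z}}$. Hence the test reads off exactly the d-separation relations of $\mathcal{G}_{\boldsymbol{Z}}$.

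The substantive step is to transfer these relations to $\mathcal{G}_{\boldsymbol{s}}$. Here I would use \asmref{asm:unsupervised}: because $\mathcal{M}$ is trained on the SCM-consistent probability space so that each input symbol is uniquely recoverable from its output embedding, the correspondence between each symbol $s_i$ and its representation $\boldsymbol{Z}_i$ is one-to-one. Since conditional independence is invariant under per-coordinate invertible reparameterization, the conditional-independence relations among the $\boldsymbol{Z}_i$ coincide with those among the symbols $s_i$; and since $\boldsymbol{s}$ was generated by the SCM with graph $\mathcal{G}_{\boldsymbol{s}}$ under causal Markov and faithfulness, these are precisely the d-separations of $\mathcal{G}_{\boldsymbol{s}}$. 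Combining with the previous paragraph yields the equivalence: $\rho_{i,j\cdot\boldsymbol{X}'}=0$ computed from $\mathbf{C}_{\mathbf{Z}}$ if and only if $s_i$ and $s_j$ are d-separated by $\boldsymbol{s}'$ in $\mathcal{G}_{\boldsymbol{s}}$, i.e. the partial-correlation test is a perfect oracle for $\mathcal{G}_{\boldsymbol{s}}$ (this is also the content recorded in Corollary~\ref{cor:ind}).

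Finally I would close the argument by invoking soundness of the constraint-based algorithm: when supplied with a CI oracle that faithfully reports the d-separations of the data-generating graph, possibly with latent confounders, a sound algorithm returns a PAG whose represented equivalence class contains the true graph (after marginalizing the latent confounders). Applying this with the oracle established above gives that $\mathcal{G}_{\boldsymbol{s}}$ is a member of the equivalence class learned from $\mathbf{C}_{\mathbf{Z}}$, as claimed.

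I expect the main obstacle to be the middle step, namely rigorously justifying that the one-to-one embedding guaranteed by \asmref{asm:unsupervised} preserves the full conditional-independence structure and not merely marginal dependence. The delicate point is that each $\boldsymbol{Z}_i$ is produced by attention and therefore depends on the entire sequence, so ``uniquely represented'' must be read as an invertible correspondence strong enough that conditioning on a set of embeddings is equivalent to conditioning on the corresponding symbols; making this precise, for instance through equality of the $\sigma$-algebras generated by $\{s_i\}$ and $\{\boldsymbol{Z}_i\}$, is where the real work lies, whereas the Gaussian partial-correlation characterization and the appeal to soundness are standard.
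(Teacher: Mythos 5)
Your proposal follows essentially the same route as the paper's proof: interpret $\mathbf{C}_{\mathbf{Z}}=\mathbf{A}\mathbf{A}^{\top}$ as the covariance of a linear-Gaussian SCM over the output embeddings (Section~\ref{sec:attention2scm}), transfer the graph to the input symbols via the one-to-one embedding guaranteed by Assumption~\ref{asm:unsupervised} (the paper asserts $\mathcal{G}_{\mathbf{Z}}=\mathcal{G}_{\boldsymbol{s}}$ at exactly the step you flag as delicate), and then invoke soundness of the constraint-based algorithm under partial-correlation CI-testing. You are somewhat more explicit than the paper in spelling out the Gaussian partial-correlation/d-separation equivalence and in noting the $\sigma$-algebra subtlety behind ``uniquely represented,'' but the decomposition and key ideas are the same.
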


\begin{proof}
    From Assumption \ref{asm:unsupervised} it follows that an output embedding, in the deepest attention layer, for each symbol is trained to predict the input symbol. It follows that the output embeddings of the deepest layer are one-to-one (injective) representations of the input symbols. Thus, the causal relations among the input symbols can be cast on their correponding output embeddings. That is, a causal graph over the output embeddings is similar to causal graph over the input symbols  $\mathcal{G}_{\mathbf{Z}}=\mathcal{G}_{\boldsymbol{s}}$. In \secref{sec:attention2scm} we showed that self-attention in the Transformer architecture can be viewed as an SCM over the output embedding, and the covariance matrix, $\mathbf{C}_{\mathbf{Z}}$, can be computed from the attention matrix (Equation 9). A conditional independence test between output embeddings can then be defined to calculate partial correlations from $\mathbf{C}_{\mathbf{Z}}$. Finally,  a sound constraint-based algorithm \citep{spirtes2000} that uses $\mathbf{C}_{\mathbf{Z}}$ for partial correlation CI-testing, recovers an equivalence class of $\mathcal{G}_{\mathbf{Z}}$.
\end{proof}

\begin{cor}
    Nodes $X_i$ and $X_j$ are d-separated in $\mathcal{G}_{\boldsymbol{Z}}$ conditioned on $\boldsymbol{X}'\subset\boldsymbol{X}$ if and only if symbols $s_i$ and $s_j$ are d-separated in $\mathcal{G}_{\boldsymbol{s}}$ conditioned on $\boldsymbol{s}'$, a subset containing symbols corresponding to $\boldsymbol{X}'$. 
\end{cor}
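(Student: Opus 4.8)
The plan is to derive the corollary directly from the graph identity $\mathcal{G}_{\boldsymbol{Z}} = \mathcal{G}_{\boldsymbol{s}}$ established in the proof of Theorem 1, exploiting the fact that d-separation is a purely structural (graph-theoretic) property that is preserved under relabeling of nodes. In this sense the corollary carries essentially no new content beyond the theorem; it merely rephrases the graph identity at the level of individual separation statements.

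First I would recall the ingredient supplied by Assumption 1 and Theorem 1: in the deepest attention layer each input symbol $s_i$ is represented injectively by its output embedding, whose SCM node we denote $X_i$. This yields a bijection $\sigma : s_i \mapsto X_i$ between the node set of $\mathcal{G}_{\boldsymbol{s}}$ and that of $\mathcal{G}_{\boldsymbol{Z}}$. Because Theorem 1 asserts $\mathcal{G}_{\boldsymbol{Z}} = \mathcal{G}_{\boldsymbol{s}}$, this bijection is edge-preserving: a directed edge $s_i \to s_j$ in $\mathcal{G}_{\boldsymbol{s}}$ is present as $X_i \to X_j$ in $\mathcal{G}_{\boldsymbol{Z}}$, and conversely. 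In other words, $\sigma$ is a graph isomorphism, and it sends the conditioning set $\boldsymbol{X}'$ to $\sigma^{-1}(\boldsymbol{X}') = \boldsymbol{s}'$, the subset of symbols corresponding to $\boldsymbol{X}'$ named in the statement.

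Next I would invoke the standard fact that d-separation is invariant under graph isomorphism. Whether a set blocks every path between two nodes depends only on the existence of paths, on which internal nodes are colliders versus non-colliders, and on the membership of those nodes in the conditioning set. An isomorphism carries paths to paths and colliders to colliders while respecting membership in the (transported) conditioning set, so a path is blocked in $\mathcal{G}_{\boldsymbol{s}}$ if and only if its image is blocked in $\mathcal{G}_{\boldsymbol{Z}}$. Applying this with $\sigma$ gives that $X_i$ and $X_j$ are d-separated given $\boldsymbol{X}'$ in $\mathcal{G}_{\boldsymbol{Z}}$ if and only if $s_i$ and $s_j$ are d-separated given $\boldsymbol{s}'$ in $\mathcal{G}_{\boldsymbol{s}}$, which is exactly the claim.

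The step I expect to carry all the weight is not the separation argument itself---once $\mathcal{G}_{\boldsymbol{Z}} = \mathcal{G}_{\boldsymbol{s}}$ is in hand, the conclusion is routine---but rather the justification that $\sigma$ is genuinely edge-preserving. This rests on the injectivity of the deepest-layer embeddings (Assumption 1), which is what allows the generating SCM's adjacency structure to be transported onto the embedding nodes without creating or destroying edges. I would therefore state explicitly that the corollary is conditional on the graph identity of Theorem 1 and, given that identity, reduces to the isomorphism-invariance of d-separation.
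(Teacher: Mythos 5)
Your proof is correct, but it takes a genuinely different route from the paper's. The paper disposes of the corollary in one line: it follows ``directly from Theorem 1 and the faithfulness assumption.'' That is, the intended argument runs through the statistical layer: under causal Markov and faithfulness, the partial-correlation CI relations computed from $\mathbf{C}_{\mathbf{Z}}=\mathbf{A}\mathbf{A}^{\top}$ coincide exactly with the d-separations of $\mathcal{G}_{\boldsymbol{Z}}$, and Theorem 1 places $\mathcal{G}_{\boldsymbol{s}}$ in the equivalence class recovered from those same relations, so the two graphs entail identical separation statements --- faithfulness being precisely what upgrades the one-directional Markov implication (d-separation implies independence) to the biconditional the corollary asserts. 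You instead bypass the distributional layer entirely: you take the identity $\mathcal{G}_{\boldsymbol{Z}}=\mathcal{G}_{\boldsymbol{s}}$ asserted \emph{inside the proof} of Theorem 1, upgrade it to a node bijection and hence a graph isomorphism via the injectivity in Assumption 1, and invoke the purely structural fact that d-separation is invariant under isomorphism. Given the identity, your argument is more elementary and needs no faithfulness at all. The trade-off is that you lean on an intermediate claim of Theorem 1's proof that is strictly stronger than the theorem's statement: equivalence-class membership does not by itself yield graph identity, since distinct members of a PAG's class differ as graphs while sharing all separation relations, and the paper's faithfulness route is exactly what one needs if only the stated theorem is granted. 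To your credit, you flag this dependence explicitly, and you are also right that mid-proof you misattribute the identity to the theorem's statement (``Theorem 1 asserts $\mathcal{G}_{\boldsymbol{Z}}=\mathcal{G}_{\boldsymbol{s}}$''), which should read ``the proof of Theorem 1.'' One small caution applying to both arguments: since the graphs may contain latent confounders, the isomorphism (respectively, the separation equivalence) must extend to the latent part of the graphs, because d-connecting paths can pass through latent nodes; the paper glosses over this as silently as you do.
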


\begin{proof}
    This result follows directly from Theorem \ref{thm:io-graphs} and faithfulness assumption (\dffref{dff:faithfulness}).
\end{proof}

\end{document}